\newcommand{\cmark}{\ding{51}}%
\newcommand{\xmark}{\ding{55}}%
 \newtheorem{theorem}{Theorem}
 \newtheorem{proposition}{Proposition}
 \newtheorem{definition}{Definition}
	\newtheorem{property}{Property}
  \theoremstyle{definition}
  \newtheorem{example}{Example}
\newcommand{\sgn}{\textrm{sgn}}
\begin{document}
\title{Studying a set of properties of inconsistency indices for pairwise comparisons\footnote{This is a preprint of the paper: Brunelli M., Studying a set of properties of inconsistency indices for pairwise comparisons, \emph{Annals of Operations Research}, doi:10.1007/s10479-016-2166-8}}
\author{
{Matteo Brunelli}
\\
{\normalsize  Systems Analysis Laboratory, Department of Mathematics and Systems Analysis} \\
{\normalsize Aalto University}, {\normalsize P.O. Box 11100, FI-00076 Aalto, Finland}
\\ {\normalsize e--mail:
\texttt{matteo.brunelli@aalto.fi}}
}
\date{}

\maketitle \thispagestyle{empty}


\begin{center}
{Abstract}
\end{center}

{\small \noindent Pairwise comparisons between alternatives are a well-established tool to decompose decision problems into smaller and more easily tractable sub-problems. However, due to our limited rationality, the subjective preferences expressed by decision makers over pairs of alternatives can hardly ever be consistent. Therefore, several inconsistency indices have been proposed in the literature to quantify the extent of the deviation from complete consistency. Only recently, a set of properties has been proposed to define a family of functions representing inconsistency indices. The scope of this paper is twofold. Firstly, it expands the set of properties by adding and justifying a new one. Secondly, it continues the study of inconsistency indices to check whether or not they satisfy the above mentioned properties. Out of the four indices considered in this paper, in its present form, two fail to satisfy some properties. An adjusted version of one index is proposed so that it fulfills them.}

 \vspace{0.3cm}
 \noindent {\small  \textbf{Keywords}: pairwise comparisons, consistency, inconsistency indices, analytic hierarchy process.}
 \vspace{0.3cm}


In decision making problems it is often common practice to use pairwise comparisons between alternatives as a basis to assign scores to the same alternatives. Pairwise comparisons allow the decision maker to decompose the problem of assigning scores to alternatives into smaller problems, where only two alternatives are considered at a time.

Another reason for using pairwise comparisons is that their use allows an estimation of the inconsistency of the preferences of a decision maker. In the literature, consistency of preferences is commonly related with the rationality of a decision maker and his ability in discriminating between alternatives \citep{Irwin1958}. Consider, for sake of illustration, three stones (alternatives) $x_{1},x_{2},x_{3}$. If, for instance, $x_{1}$ is reputed twice as heavy as $x_{2}$, and $x_{2}$ twice as heavy as $x_{3}$, then it is reasonable to assume that $x_{1}$ should be four times as heavy as $x_{3}$. This situation is called consistent, as the pairwise comparisons of the decision maker respect a principle of transitivity/rationality, and is depicted in Figure \ref{fig:consistency}(a). An example of inconsistent pairwise comparisons is illustrated in Figure \ref{fig:consistency}(b).
\begin{figure}[htb]
\centering
\subfloat[Consistent triad of pairwise comparisons.]{
\includegraphics[scale=0.37]{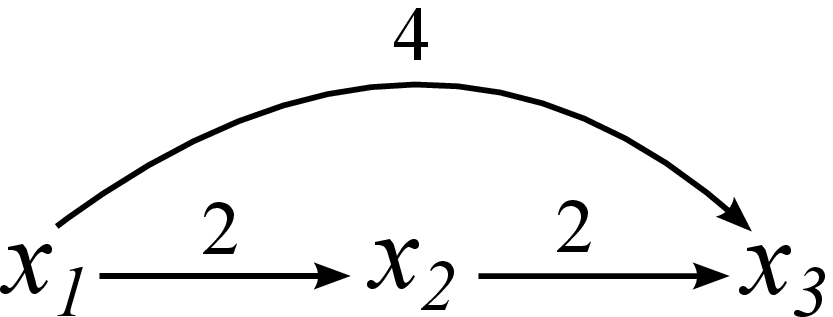} 
}
\hspace{1cm}
\subfloat[Inconsistent triad of pairwise comparisons.]{
\includegraphics[scale=0.37]{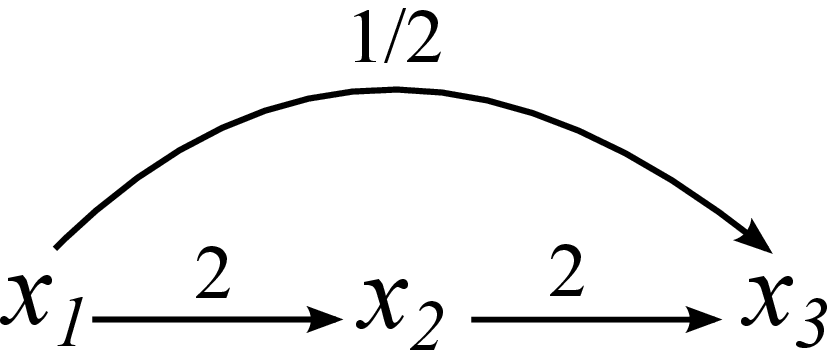}
}
\caption{Example of consistent and inconsistent triads of pairwise comparisons on $x_{1},x_{2},x_{3}$.\label{fig:consistency}}
\end{figure}

There is a meeting of minds on accepting preferences which are not consistent, but not too inconsistent either. In this paper, with the term \emph{inconsistency} we mean a deviation from the condition of full consistency. In the theory of the AHP, \citet{Saaty1993,Saaty2013} required pairwise comparisons to be near consistent, i.e. not too inconsistent. As recalled by \citet{Gass2005}, \citet{LuceRaiffa1957} shared the same opinion in accepting inconsistencies and wrote ``No matter how intransitivities arise, we must recognize that they exist, and we can take a little comfort in the thought that they are an anathema to most of what constitutes theory in the behavioral sciences today''. On a similar note, \citet{Fishburn1999} wrote that ``Transitivity is obviously a great practical convenience and a nice thing to have for mathematical purposes, but long ago this author ceased to understand why it should be a cornerstone of normative decision theory''.

It is in this context---where consistency is an auspicable but hardly ever achievable condition---that it becomes crucial to quantify inconsistency. Such quantification is indeed possible, since it is natural to envision that the notion of inconsistency is a matter or degree.
Consequently, a wealth of inconsistency indices has been proposed in the literature; for instance the Consistency Index \citep{Saaty2013}, the Harmonic Consistency Index \citep{SteinMizzi2007}, the Geometric Consistency Index \citep{AguaronMoreno2003}, the statistical index by \citet{LinEtAl2013}, and the index by \citet{Kulakowski2015}, just to cite few.

It is worth noting that the study of inconsistency of preferences is not limited to the single mathematical methods employing pairwise comparisons, as for instance the AHP. It is the case to remark that the study of inconsistency is immune from many of the criticisms moved against specific mathematical methods employing them. For instance, one of the critical points of the Analytic Hierarchy Process (AHP) is the rank reversal, which was discovered by \citet{BeltonGear1983} and recently surveyed by \citet{MalekiZahir2013}. Similarly, already \citet{WatsonFreeling1982,WatsonFreeling1983} questioned the interpretation of the weights in the AHP and their use in the aggregation of different priority vectors. In part, also the criticisms by \citet{Dyer1990a,Dyer1990b} were triggered by the interpretation of the weights. Nevertheless, even though the above mentioned criticisms are to be taken into account, they are connected with the aggregation and interpretation of priority vectors proposed for the AHP, and therefore they will not affect the subject matter of inconsistency evaluation. Further support to the use of pairwise comparison matrices and their interpretation comes from the fact that pairwise comparison matrices as defined in this paper are group isomorphic \citep{CavalloDApuzzo2009}---and thus structurally identical---to the probabilistic preference relations studied by \citet{LuceSuppes1965}. Such a strict connection between these two representations of valued preferences does not only make them mutually supportive, but increases the relevance of studying one of them---as it is going to be done in this paper---since abstract results are then extendible to the other one.

The use of the notion of inconsistency has gone beyond its mere quantification. One prominent use of inconsistency indices is that of localizing the inconsistency and detect what comparisons are the most contradictory \citep{ErguEtAl2011} and guide the decision maker when he tries to obtain sufficiently consistent preferences \citep{PereiraCosta2015}. This process was also advocated by \citet{Fishburn1968} in a discussion on decision theory: ``If the individual's preferences appear to violate a ``rational'' preference assumption, the theory suggests that he reexamine and revise one or more preference judgments to eliminate the inconsistency.''.
Another use of inconsistency indices regards pairwise comparison matrices with missing entries. In these situations, inconsistency indices have been used as objective functions to be minimized to find the most plausible values of the missing comparisons with respect to the elicited ones \citep{Koczkodaj1999,LamataPelaez2002,ShiraishiEtAl1999,ChenEtAl}. All this can be seen as evidence on the role played by inconsistency indices in the decision process, and consequently on the importance of having realiable indices.

Inconsistency of preferences has been studied empirically \citep{BozokiEtAl2013}, and existing studies on inconsistency indices compared them numerically \citep{BrunelliCanalFedrizzi} and showed that some indices are very different and therefore can lead to very different evaluations of the inconsistency of preferences. Conversely, it was proven that some of them are in fact proportional to each other \citep{BrunelliCritchFedrizzi2013}. Recently, \citet{BrunelliFedrizziAxioms} and \citet{KoczkodajSzwarc2014} proposed two formal approaches. \citet{BrunelliFedrizziAxioms} proposed five properties in the form of axioms to formalize the concept of inconsistency index and then tested on some well-known indices.

In the pursuit of a formal treatment of inconsistency quantification, this paper presents some developments concerning the aforementioned set of properties. Firstly, in Section \ref{sec:3}, a new property, of invariance under inversion of preferences, is introduced and its role is discussed. Secondly, Section \ref{sec:4} contains further results on the satisfaction of the properties by some known inconsistency indices. More specifically, we shall study four indices and discover that, in its present form, two do not fully satisfy the set of properties. An adjustment of one index is then proposed so that it satisfies them. Finally, Section \ref{sec:5} offers a concise discussion on the role of inconsistency quantification and on the results obtained in this paper.

\section{Pairwise comparison matrices and inconsistency indices}
\label{sec:2}
Given a set $X=\{ x_{1},\ldots,x_{n} \}$ of $n$ alternatives, a \emph{pairwise comparison matrix} is a positive square matrix $\mathbf{A}=(a_{ij})_{n \times n}$ such that $a_{ij}a_{ji}=1$, where $a_{ij}>0$ is the subjective assessment of the relative importance of the $i$th alternative with respect to the $j$th one. 
A pairwise comparison matrix can be seen as a convenient mathematical structure into which valued pairwise comparisons between alternatives are collected. Its general and its simplified (thanks to $a_{ij}a_{ji}=1$) forms are the following,
\begin{equation*}
\label{id:matricecanonica} \mathbf{A}=
(a_{ij})_{n \times n} =
\begin{pmatrix}
a_{11} & a_{12} & \ldots & a_{1n} \\
a_{21} & a_{22} & \ldots & a_{2n} \\
\vdots & \vdots & \ddots & \vdots \\
a_{n1} & a_{n2} & \ldots & a_{nn}
\end{pmatrix}
= \begin{pmatrix}
1                & a_{12}           & \cdots           & a_{1n} \\
\frac{1}{a_{12}} & 1                & \cdots           & a_{2n} \\
\vdots           & \vdots           & \ddots           & \vdots \\
\frac{1}{a_{1n}} & \frac{1}{a_{2n}} & \cdots           &1
\end{pmatrix}.
\end{equation*}
The rest of the paper will follow the usual interpretation of entries $a_{ij}$ in terms of ratios between quantities expressible on a ratio scale with a zero element. The classical example is that of $x_{1}$ and $x_{2}$ being stones and $a_{ij}$ being the numerical estimation of the ratio between their weights. Note that this approach considers entries $a_{ij}>0$ taking values from an unbounded scale and complies with the formal treatment given by \citet{HermanKoczkodaj1996} and \citet{KoczkodajSzwarc2014}. 
Furthermore, with this interpretation, a pairwise comparison matrix is \emph{consistent} if and only if
\begin{equation}
\label{eq:consistency}
a_{ik}=a_{ij}a_{jk}~~~\forall i,j,k,
\end{equation}
which means that each direct comparison $a_{ik}$ is exactly backed up by all indirect comparisons $a_{ij}a_{jk}~\forall j$. 
%
For notational convenience, the set of all pairwise comparison matrices is defined as
\[
\mathcal{A}= \left\{ \mathbf{A}=(a_{ij})_{n \times n} | a_{ij}>0, a_{ij}a_{ji}=1 ~\forall i,j, ~ n>2 \right\}.
\]
The set of all \emph{consistent} pairwise comparison matrices $\mathcal{A}^{*} \subset  \mathcal{A}$ is defined accordingly,
\[
\mathcal{A}^{*}= \{ \mathbf{A}=(a_{ij})_{n \times n} | \mathbf{A} \in \mathcal{A}, a_{ik} = a_{ij}a_{jk} ~ \forall i,j,k \}.
\]
An inconsistency index is a function $I:\mathcal{A} \rightarrow \mathbb{R}$ which evaluates the intensity of deviation of a pairwise comparison matrix $\mathbf{A}$ from its consistent form (\ref{eq:consistency}). In other words, the value $I(\mathbf{A})$ is an estimation of how much irrational the preferences collected in $\mathbf{A}$ are. Up to now, various inconsistency indices have been introduced heuristically, and an open question relates to what set of properties should be used to characterize them. That is, all the reasonable properties for a function $I$ to fairly capture inconsistency could be used for various purposes; for example to check the validity of already proposed indices \citep{BrunelliFedrizziAxioms}, devise new ones, and derive further properties \citep{BrunelliFedrizzi2014}. Figure \ref{fig:axioms} offers a snapshot of the meaning of the set of properties.
\begin{figure}[htbp]
	\centering
		\includegraphics[width=0.37\textwidth]{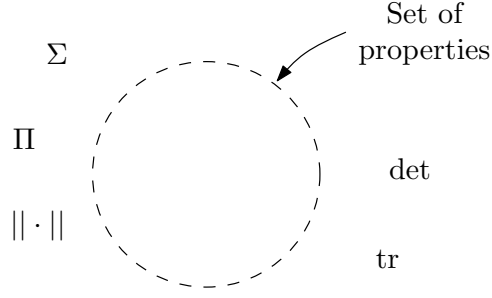}
	\caption{The set of properties can be used to define a family of functions which can be used to estimate inconsistency, and discards functions which do not make sense if used as inconsistency indices, e.g. the trace and the determinant of $\mathbf{A}$.}
	\label{fig:axioms}
\end{figure}

\noindent  \citet{BrunelliFedrizziAxioms} proposed five properties to characterize inconsistency indices. Since these properties were already justified and defined in the original work, they are here only briefly recalled. Note that they were organized in the form of an axiomatic systems, meaning that the soundness of single properties implies the soundness of the entire set of properties, i.e. the ``logical intersection'' of the properties.
\begin{description}
	\item[P1:] There exists a unique $\nu \in \mathbb{R}$ representing the situation of full consistency, i.e.
	\[
	 \label{P1}
\exists ! \nu \in \mathbb{R} \text{ such that } I(\mathbf{A})= \nu \Leftrightarrow \mathbf{A} \in \mathcal{A}^{*}.
	\]
	\item[P2:] Changing the order of the alternatives does not affect the inconsistency of preferences. That is,
	\begin{equation*}
\label{eq:permutation}
I(\mathbf{P}\mathbf{A}\mathbf{P}^{T})=I(\mathbf{A}),
\end{equation*}
for any permutation matrix $\mathbf{P}$.

\item[P3:] If preferences in $\mathbf{A}$ are intensified, then the inconsistency cannot decrease. More formally, since the power is the only meaningful function to intensify preferences, we defined $\mathbf{A}(b)=\left( a_{ij}^{b} \right)_{n \times n}$. Then, the property is as follows,
\[
I(\mathbf{A}(b)) \geq I(\mathbf{A}) ~~~~\forall \mathbf{A} \in \mathcal{A}, ~~b\geq 1.
\]

\item[P4:] Given a consistent pairwise comparison matrix and considering an arbitrary non-diagonal element $a_{pq}$ (and its reciprocal $a_{qp}$) such that $a_{pq} \neq 1$, then, as we push its value far from its original one, the inconsistency of the matrix should not decrease. More formally, given a consistent matrix
$\mathbf{A} \in \mathcal{A}^{*}$, let $\mathbf{A}_{pq}(\delta)$ be the
inconsistent matrix obtained from \textbf{A} by replacing the
entry $a_{pq}$ with $a_{pq}^{\delta}$, where $\delta \neq 1$.
Necessarily, $a_{qp}$ must be replaced by $a_{qp}^{\delta}$ in
order to preserve reciprocity. Let $\mathbf{A}_{pq}(\delta')$ be the
inconsistent matrix obtained from \textbf{A} by replacing entries $a_{pq}$ and $a_{qp}$ with $a_{pq}^{\delta'}$ and $a_{qp}^{\delta'}$ respectively.
The property can then be formulated as
\begin{equation}
\label{monotonicity}
\begin{split}
 \delta' > \delta > 1 & \Rightarrow I(\mathbf{A}_{pq}(\delta')) \geq I(\mathbf{A}_{pq}(\delta)) \\
 \delta' < \delta < 1 & \Rightarrow I(\mathbf{A}_{pq}(\delta')) \geq I(\mathbf{A}_{pq}(\delta)),
\end{split}
\end{equation}
for all $\delta \neq 1$, $p,q=1,\ldots,n$, and $\mathbf{A} \in \mathcal{A}^{*}$.

\item[P5:] Function $I$ is continuous with respect to the entries of $\mathbf{A}$.
\end{description}

\section{A new property of invariance under inversion of preferences}
\label{sec:3}

Preferences expressed in the form of a pairwise comparison matrix $\mathbf{A}$ can be inverted by taking its transpose $\mathbf{A}^{T}$. For instance, if $a_{ij}=2$ in $\mathbf{A}$ is inverted into $a_{ij}=1/2$ we have that the intensity of preference is the same, but the direction is inverted. Clearly, by inverting all the preferences we change their polarity, but leave their structure unchanged. Thus, it is reasonable to expect a structural property of preferences---as inconsistency is---to be invariant under inversion. This can be formalized in the following property of invariance under inversion of preferences (P6).

\begin{property}[P6]
An inconsistency index satisfies P6, if and only if $I(\mathbf{A})=I(\mathbf{A}^{T})~\forall \mathbf{A} \in \mathcal{A}$.
\end{property}

The previous justification of this property can be transposed into an example. Consider the following matrix $\mathbf{A}$ and its transpose $\mathbf{A}^{T}$.
\begin{equation}
\label{eq:ex_trans}
\mathbf{A}=
\begin{pmatrix}
1 & 1/2 & 1/4 \\
2 & 1   & 1/3 \\
4 & 3   & 1
\end{pmatrix}
\hspace{0.7cm}
\mathbf{A}^{T}=
\begin{pmatrix}
1 & 2 & 4 \\
1/2 & 1   & 3 \\
1/4 & 1/3   & 1
\end{pmatrix}
\end{equation}
One can equivalently express the structure of the preferences by means of directed weighted graphs with nodes $x_{i}$ and values of the edges $a_{ij}$. Figure \ref{fig:A6} represents these graphs for $\mathbf{A}$ and $\mathbf{A}^{T}$, respectively.
\begin{figure}[htb]
\centering
\subfloat[Graph of $\mathbf{A}$.]{
\includegraphics[scale=1.2]{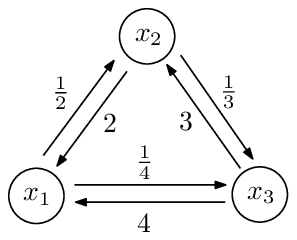} 
}
\hspace{0.7cm}
\subfloat[Graph of $\mathbf{A}^{T}$.]{
\includegraphics[scale=1.2]{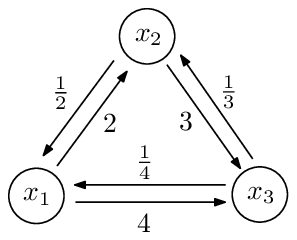}
}
\caption{Graphs of $\mathbf{A}$ and $\mathbf{A}^{T}$.\label{fig:A6}}
\end{figure}
 
\noindent The two graphs are identical, with the only exception of the directions of the arrows. Now, if we do not impose P6, we might end up with inconsistency indices which consider the violation of the condition of consistency in the direction $1 \rightarrow 2 \rightarrow 3 \rightarrow 1$ more or less (but not equally) important than the violation in the direction $1 \rightarrow 3 \rightarrow 2 \rightarrow 1$, although both directions equivalently reflect the same structure of preferences.

The justification of P6 comes from theoretical intuition, but in some decision processes both a matrix $\mathbf{A}$ ad its transpose $\mathbf{A}^{T}$ can actually appear. Examples are applications in group decision making where pairwise comparison matrices are used in surveys on customers' needs  and users' satisfactions, as for instance done by \citet{NikouMezei2013} and \citet{NikouMezeiSarlin2015}. In these contexts there are as many pairwise comparison matrices as responding customers (usually a large number) and therefore it is not completely unlikely to find both preferences represented by $\mathbf{A}$ and $\mathbf{A}^{T}$, especially when there are few alternatives and intensities of preference are weak. i.e., values of entries are close to 1.

Note that, in general, an inversion of preferences cannot be obtained by row-column permutations. For example, given the matrices $\mathbf{A}$ and $\mathbf{A}^{T}$ in (\ref{eq:ex_trans}), there does \emph{not} exist a permutation matrix $\mathbf{P}$ such that $\mathbf{P}\mathbf{A}\mathbf{P}^{T}=\mathbf{A}^{T}$.

One natural question is whether or not this new property, P6, is implied by a conjoint application of the others (independence) and if, when added to the set P1--P5, does not make it contradictory (logical consistency). The following theorem claims the independence and the logical consistency of the properties P1--P6. 

\begin{theorem}
\label{th:A6}
Properties P1--P6 are independent and form a logically consistent axiomatic system.
\end{theorem}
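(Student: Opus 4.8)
The plan is to establish the two assertions separately. For \textbf{logical consistency}, I would exhibit at least one inconsistency index that simultaneously satisfies all six properties P1--P6; this shows the axiomatic system is not contradictory. A natural candidate is a symmetrized version of one of the classical indices --- for instance the Geometric Consistency Index of \citet{AguaronMoreno2003} or Koczkodaj's index, both of which are already known (from \citet{BrunelliFedrizziAxioms}) to satisfy P1--P5; one then only has to check P6, i.e.\ $I(\mathbf{A})=I(\mathbf{A}^{T})$. For Koczkodaj's index this is immediate because the defining expression over triads is symmetric under simultaneously inverting all $a_{ij}$ (inverting a triad $(a_{ij},a_{jk},a_{ik})$ just swaps the two quantities whose mismatch is measured), and likewise the geometric index is built from terms $\log^2(a_{ij}a_{jk}/a_{ik})$ which are invariant under $a_{ij}\mapsto a_{ij}^{-1}$ applied to every entry. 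So a one-paragraph verification closes the consistency half.

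For \textbf{independence} I would, for each property $\mathrm{P}k$ with $k\in\{1,\dots,6\}$, construct a function $I_k:\mathcal{A}\to\mathbb{R}$ that satisfies all properties in $\{\mathrm{P}1,\dots,\mathrm{P}6\}\setminus\{\mathrm{P}k\}$ but violates $\mathrm{P}k$. Since \citet{BrunelliFedrizziAxioms} already proved independence of P1--P5, I would expect the examples $I_1,\dots,I_5$ witnessing independence within that smaller system to be reusable, provided one additionally checks that each of them satisfies P6 --- and one is free to symmetrize any of them (replace $I$ by $\mathbf{A}\mapsto \max\{I(\mathbf{A}),I(\mathbf{A}^{T})\}$ or $\mathbf{A}\mapsto I(\mathbf{A})+I(\mathbf{A}^{T})$ or $\mathbf A\mapsto I(\mathbf A)\cdot I(\mathbf A^T)$), which trivially enforces P6 while, with a little care, preserving P1--P5. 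The genuinely new work is the counterexample $I_6$: a function satisfying P1--P5 but \emph{not} P6. Here I would exploit the observation made just before the theorem that $\mathbf{A}^{T}$ is in general \emph{not} a permutation of $\mathbf{A}$, so P2 does not force P6. A concrete construction: take a known index $J$ satisfying P1--P5 (say the geometric consistency index, with consistency value $0$) and perturb it by a term that distinguishes $\mathbf A$ from $\mathbf A^T$ yet vanishes on $\mathcal A^*$ and does not break monotonicity or continuity --- for example
\[
I_6(\mathbf{A}) \;=\; J(\mathbf{A}) \;+\; \varepsilon\,\Big| \textstyle\sum_{i<j}\log a_{ij} \Big|\cdot J(\mathbf{A})
\]
for small $\varepsilon>0$, or more simply a function that on the specific pair \eqref{eq:ex_trans} takes different values. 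One checks: $I_6(\mathbf A)=0\iff \mathbf A\in\mathcal A^*$ (P1, since the added factor multiplies $J$); permutation invariance (P2, if the extra factor is built from a symmetric-function-of-entries like $\sum_{i<j}(\log a_{ij})^2$ rather than $\sum_{i<j}\log a_{ij}$ --- care needed here, this is the delicate point); monotonicity under $b\ge1$ and under pushing a single entry (P3, P4, because both $J$ and the extra nonnegative multiplier are monotone in the relevant sense); continuity (P5, obvious); and $I_6(\mathbf A)\ne I_6(\mathbf A^T)$ for a suitable $\mathbf A$ (P6 fails).

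The \textbf{main obstacle} I anticipate is precisely the simultaneous bookkeeping for $I_6$: finding a perturbation term that breaks transpose-symmetry while still being permutation-invariant (P2 is itself a symmetry requirement, so the perturbation must be invariant under the conjugation action of $S_n$ but not under transposition), and that does not destroy the monotonicity properties P3 and P4 --- these are the properties most easily broken by an ad hoc additive or multiplicative correction. I would therefore spend most of the effort verifying P2, P3, P4 for the single example $I_6$, and treat the remaining fifteen verifications (the five inherited examples, re-checked or symmetrized for P6, plus the consistency witness) more briefly, citing \citet{BrunelliFedrizziAxioms} where the P1--P5 content is already in the literature.
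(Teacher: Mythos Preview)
Your overall architecture matches the paper's: exhibit one index satisfying P1--P6 for logical consistency, then reuse the five counterexamples from \citet{BrunelliFedrizziAxioms} (after checking they already satisfy P6 --- they do, so no symmetrization is actually needed) and build a single new function $I_{6}$ that satisfies P1--P5 but not P6. The paper proceeds exactly this way.

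The genuine gap is in your construction of $I_{6}$. Both of the perturbation factors you propose are \emph{transpose-invariant}, so they cannot break P6. Under transposition every $a_{ij}$ is replaced by $a_{ji}=a_{ij}^{-1}$, hence $\log a_{ij}\mapsto -\log a_{ij}$; consequently
\[
\Big|\textstyle\sum_{i<j}\log a_{ij}\Big| \longmapsto \Big|\textstyle\sum_{i<j}(-\log a_{ij})\Big| = \Big|\textstyle\sum_{i<j}\log a_{ij}\Big|,
\qquad
\textstyle\sum_{i<j}(\log a_{ij})^{2}\longmapsto \textstyle\sum_{i<j}(\log a_{ij})^{2}.
\]
Thus $J(\mathbf{A})\bigl(1+\varepsilon\,|\sum_{i<j}\log a_{ij}|\bigr)$ and its squared variant both satisfy $I_{6}(\mathbf{A})=I_{6}(\mathbf{A}^{T})$, i.e.\ P6 holds, which is precisely what you need to fail. (Incidentally, $\sum_{i<j}\log a_{ij}$ without the absolute value is \emph{not} permutation-invariant, so that bare sum would violate P2 instead --- the wrong property.) You correctly flag this as ``the delicate point'', but you have not resolved it: you need a scalar that is invariant under $\mathbf{A}\mapsto \mathbf{P}\mathbf{A}\mathbf{P}^{T}$ yet changes under $\mathbf{A}\mapsto\mathbf{A}^{T}$, and nothing built as a symmetric function of the multiset $\{\log a_{ij}:i\neq j\}$ can do this, since that multiset is preserved by both operations.

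The paper's device is to pick out a distinguished row $H$ (the row containing the largest off-diagonal entry) and use $M=1+\max\{\min_{j\neq H}(a_{Hj}-1),\,0\}$ as the multiplicative factor on a triad-based index $I^{*}$. The row selection rule is permutation-equivariant, so $M$ is permutation-invariant (P2); but if row $H$ has all off-diagonal entries $>1$ then in $\mathbf{A}^{T}$ that same row has all off-diagonal entries $<1$, so $M$ changes under transposition and P6 fails. The bulk of the paper's appendix is then spent verifying P4 for $I^{*}\cdot M$, which is indeed where the work lies --- so your instinct about where the effort goes is right, but you still need a working candidate before that verification can begin.
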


\begin{proof}
See Appendix.
\end{proof}

In light of the previously offered justification and Theorem \ref{th:A6}, one concludes that P6
is another interesting property of inconsistency indices, and is independent from P1--P5.

\section{Extending the analysis of the satisfaction of the axioms}
\label{sec:4}

Previous research \citep{BrunelliFedrizziAxioms,CavalloDApuzzoIPMU} has made the effort of proving whether or not some known inconsistency indices satisfy the set of properties. This section continues the investigation on the satisfaction of the set of properties by testing four indices proposed in the literature and used in real-world decision making problems. For each index we shall recall the definition and highlight its relevance in both theory and practice.

\subsection{Index $K$ by Koczkodaj}

The following index, $K$, was introduced by \citet{Koczkodaj1993} and extended by \citet{Koczkodaj1994}.

\begin{definition}[Index $K$ \citep{Koczkodaj1994}]
\label{def:K}
Given a pairwise comparison matrix $\mathbf{A}$, the index $K$ is
\begin{equation}
\label{eq:K}
K(\mathbf{A})=\max \left\{ \min \left\{
\left| 1- \frac{a_{ik}}{a_{ij}a_{jk}} \right|, \,
\left| 1- \frac{a_{ij}a_{jk}}{a_{ik}} \right|
\right\} : 1 \leq i < j < k \leq n \right\}.
\end{equation}
\end{definition}
This index has been used to estimate missing entries of incomplete pairwise comparisons \citep{Koczkodaj1999} and in real-world applications in problems such as the evaluation of research institutions in Poland \citep{KoczkodajEtAl2014} and medical diagnosis \citep{KakiashviliEtAl2012}. It was also compared to Saaty's Consistency Index \citep{BozokikRapcsak2008} and on occasions even claimed superior to it \citep{KoczkodajSzwarc2014}. Given its theoretical and practical relevance, it is therefore important to check what properties it satisfies. Here we show that index $K$ satisfies the six properties P1--P6.

\begin{proposition}
\label{prop:K}
Index $K$ satisfies the properties P1--P6.
\end{proposition}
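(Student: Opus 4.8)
The plan is to verify each of the six properties P1--P6 in turn for the index $K$ defined in (\ref{eq:K}), treating them roughly in order of increasing difficulty. For \textbf{P1}, I would show that $\nu = 0$ is the unique constant attained exactly on $\mathcal{A}^{*}$. Observe that every term $\left| 1 - \frac{a_{ik}}{a_{ij}a_{jk}}\right|$ and $\left| 1 - \frac{a_{ij}a_{jk}}{a_{ik}}\right|$ is nonnegative, so $K(\mathbf{A}) \geq 0$ always. If $\mathbf{A}$ is consistent then $a_{ik} = a_{ij}a_{jk}$ for all $i,j,k$, so each bracketed term is $0$ and $K(\mathbf{A}) = 0$. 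Conversely, if $K(\mathbf{A}) = 0$, then for every triple $i<j<k$ the inner minimum is $0$, which forces $\frac{a_{ik}}{a_{ij}a_{jk}} = 1$; since consistency of a reciprocal matrix is equivalent to $a_{ik} = a_{ij}a_{jk}$ on all triples with $i<j<k$ (the remaining cases follow by reciprocity and permutation of indices), $\mathbf{A} \in \mathcal{A}^{*}$. For \textbf{P5}, continuity is immediate: $K$ is a finite composition of the continuous operations $\max$, $\min$, absolute value, multiplication and division by strictly positive quantities (the denominators $a_{ij}a_{jk}$ and $a_{ik}$ never vanish on $\mathcal{A}$), hence continuous on all of $\mathcal{A}$.

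For \textbf{P2}, I would note that conjugating $\mathbf{A}$ by a permutation matrix $\mathbf{P}$ simply relabels the indices; the set of unordered triples $\{i,j,k\}$ is permuted among itself, and for a fixed triple the value of the inner $\min$ over the two ``directions'' is a symmetric function of the triad, so the outer $\max$ over all triples is unchanged. A small point worth spelling out is that the definition restricts to $i<j<k$ only as a bookkeeping device; the quantity $\min\{|1-x|,|1-1/x|\}$ with $x = \tfrac{a_{ik}}{a_{ij}a_{jk}}$ is invariant under any reordering of $i,j,k$ (this uses $a_{ij}a_{ji}=1$), so the index genuinely depends only on the multiset of triads. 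For \textbf{P6}, transposition sends $a_{ij}\mapsto a_{ji} = 1/a_{ij}$, so for each triple the ratio $\tfrac{a_{ik}}{a_{ij}a_{jk}}$ becomes $\tfrac{a_{ki}}{a_{kj}a_{ji}} = \tfrac{1/a_{ik}}{(1/a_{jk})(1/a_{ij})} = \tfrac{a_{ij}a_{jk}}{a_{ik}}$, i.e. $x \mapsto 1/x$; but $\min\{|1-x|,|1-1/x|\}$ is visibly symmetric under $x \leftrightarrow 1/x$, so each triple contributes the same value to $\mathbf{A}$ and $\mathbf{A}^{T}$, whence $K(\mathbf{A}) = K(\mathbf{A}^{T})$.

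The monotonicity properties \textbf{P3} and \textbf{P4} are where the real work lies. Define $g(x) = \min\{|1-x|,|1-1/x|\}$ for $x>0$; the key lemma is that $g$ is nonincreasing on $(0,1]$ and nondecreasing on $[1,\infty)$ — roughly, $g(x) = 1-x$ on a left neighbourhood of... more precisely $g(x)=1-1/x$ for $x\le 1$ and $g(x)=1-1/x$ fails; one checks $|1-x|\le|1-1/x|$ exactly when $x\ge 1$, so $g(x) = 1-1/x$ on $(0,1]$ and $g(x)=1-1/x$ — I will instead just record that $g$ decreases then increases with minimum $0$ at $x=1$, and moreover $g(x^{b})\ge g(x)$ for $b\ge 1$ and every $x>0$ (since raising to a power $\ge 1$ pushes $x$ away from $1$ on the correct side). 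For \textbf{P3}, under $\mathbf{A}\mapsto\mathbf{A}(b)$ each ratio $x_{ijk} = \tfrac{a_{ik}}{a_{ij}a_{jk}}$ becomes $x_{ijk}^{b}$, so each triad's contribution $g(x_{ijk})$ does not decrease, and hence neither does their maximum. For \textbf{P4}, starting from a consistent $\mathbf{A}$ and replacing $a_{pq}\mapsto a_{pq}^{\delta}$, one must track how the ratio $x_{ijk}$ changes for each triple: a triple is affected only if it contains both $p$ and $q$, and for such a triple $x_{ijk}$ becomes $a_{pq}^{\pm(\delta-1)}$ up to sign in the exponent (since the matrix was consistent, the unaffected factors cancel), so its contribution is $g\!\left(a_{pq}^{\delta-1}\right)$ or $g\!\left(a_{pq}^{-(\delta-1)}\right)$, both equal by the symmetry of $g$. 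As $\delta$ moves away from $1$ (in either direction), $|\delta-1|$ increases, so $a_{pq}^{\delta-1}$ moves monotonically away from $1$ (recall $a_{pq}\neq 1$), and by the lemma $g\!\left(a_{pq}^{\delta-1}\right)$ does not decrease; the outer maximum therefore does not decrease either, giving both implications in (\ref{monotonicity}). I expect the main obstacle to be the careful bookkeeping in P4: one must argue that no \emph{other} triple's contribution can spoil the monotonicity of the maximum, which works precisely because the affected triples' contributions are themselves monotone in $\delta$ and the unaffected ones are constant (in fact zero, since $\mathbf{A}$ was consistent), so the pointwise maximum of a constant-zero family and a monotone family is monotone.
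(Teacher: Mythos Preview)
Your proposal is correct and follows essentially the same strategy as the paper: both arguments treat P1, P2, P5, P6 as routine and reduce P3 and P4 to showing that the single-triad quantity $\min\{|1-x|,|1-1/x|\}$ is monotone as the ratio $x$ moves away from $1$. The paper establishes this by differentiating $|1-x^{b}|$ and $|1-x^{-b}|$ in $b$ (and analogously in $\delta$), whereas you package it as a lemma about $g(x)=\min\{|1-x|,|1-1/x|\}$ being decreasing on $(0,1]$, increasing on $[1,\infty)$, and symmetric under $x\leftrightarrow 1/x$; this is slightly more elementary and, for P4, your bookkeeping (unaffected triples contribute $0$, affected ones all contribute $g(a_{pq}^{\delta-1})$) is in fact more explicit than the paper's. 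One cosmetic point: your attempted piecewise formula for $g$ is garbled (the inequality $|1-x|\leq|1-1/x|$ actually holds for $x\leq 1$, giving $g(x)=1-x$ on $(0,1]$ and $g(x)=1-1/x$ on $[1,\infty)$), but since you ultimately rely only on the unimodality and symmetry of $g$, which are correct, this does not affect the argument.
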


\begin{proof}
It is straightforward, and thus omitted, to show that properties P1, P2, P5, and P6 are satisfied. For P3 we need to show that the local inconsistency for the generic transitivity $(i,j,k)$, 
\begin{equation}
\label{eq:trans_K}
\min \left\{ \left| 1- \frac{a_{ik}^{b}}{a_{ij}^{b}a_{jk}^{b}} \right|, \left| 1- \frac{a_{ij}^{b}a_{jk}^{b}}{a_{ik}^{b}} \right| \right\} \, ,
\end{equation}
is non-decreasing for $b \geq 1$. We can do it by proving that $\frac{\partial K}{\partial b} \geq 0 ~\forall b>1$. With $x^{b}:=\frac{a_{ik}^{b}}{a_{ij}^{b}a_{jk}^{b}}$, we study the two quantities
\[
\text{I}=|1-x^{b}|~~~~\text{II}=|1-x^{-b}|.
\]
If the triple $(i,j,k)$ is consistent, then $x=1$ and P3 is satisfied. If the triple $(i,j,k)$ is not consistent, then $x \neq 1$ and positive, and the derivatives of $\text{I}$ and $\text{II}$ in $b$ are:
\begin{align*}
\frac{\partial \text{I}}{\partial b} &= -x^{b} \log(x)  \sgn \left( 1-x^b \right) \\
\frac{\partial \text{II}}{\partial b} &= x^{-b} \log(x)  \sgn \left( 1-x^{-b} \right).
\end{align*}
Given $b \geq 1$, if $x \neq 1$, then $\frac{\partial \text{I}}{\partial b}$ and $\frac{\partial \text{II}}{\partial b}$ are positive, which proves that (\ref{eq:trans_K}) is a non-decreasing function for $b \geq 1$. It follows that also $K$ is a non-decreasing function of $b \geq 1$.\\
To prove the satisfaction of P4 we start considering
\begin{equation}
\label{eq:K_A3}
 \min \left\{ \left| 1- \frac{a_{ik}^{\delta}}{a_{ij}a_{jk}} \right|, \left| 1- \frac{a_{ij}a_{jk}}{a_{ik}^{\delta}} \right| \right\}
\end{equation}
with $a_{ik}=a_{ij}a_{jk}$. By setting $y=a_{ik}=a_{ij}a_{jk}$ we can rewrite it as 
\[
 \min \left\{ \left| 1-y^{\delta-1} \right|, \left|  1-y^{1-\delta} \right| \right\}
\]
and show that it is a non-decreasing function for $b \geq 1$ and a non-increasing function for $0<b \leq 1$. We then need to study the following quantities:
\[
\text{I}=|1-y^{\delta -1}|~~~~~~\text{II}=|1-y^{1-\delta}|
\]
and their derivatives in $\delta$
\[
\frac{\partial \text{I}}{\partial \delta}=-y^{\delta -1} \log (y) \sgn \left( 1-y^{\delta -1} \right)~~~~~~\frac{\partial \text{II}}{\partial \delta}= y^{1 - \delta } \log (y) \sgn \left( 1-y^{1 - \delta} \right).
\]
By studying their sign we can derive that
\begin{align*}
0<\delta < 1 & \Rightarrow \frac{\partial \text{I}}{\partial \delta}, \frac{\partial \text{II}}{\partial \delta} \leq 0 \Rightarrow \frac{\partial K}{\partial \delta} <0 \\
\delta > 1 & \Rightarrow \frac{\partial \text{I}}{\partial \delta}, \frac{\partial \text{II}}{\partial \delta} \geq 0 \Rightarrow \frac{\partial K}{\partial \delta} >0.
\end{align*}
Similarly, P4 can be proven also in the case when the exponent $\delta$ is at the denominator of $\frac{a_{ik}}{a_{ij}a_{jk}}$ in (\ref{eq:K_A3}). 
\end{proof}

\subsection{Index $AI$ by Salo and H\"{a}m\"{a}l\"{a}inen}

\citet{SaloHamalainen1995,SaloHamalainen1997} proposed their inconsistency index, $AI$, which stands for ambiguity index. Their inconsistency index has been implemented in the online decision making platform Web-HIPRE \citep{Mustajoki2000} and has been used, for instance, in the analysis of a real-world governmental decision on energy production alternatives \citep{SaloHamalainen1995} and in traffic planning \citep{HamalainenPoyhonen1996}.

\begin{definition}
\label{def:AI}
Given a pairwise comparison matrix $\mathbf{A}$ and an auxiliary matrix $\mathbf{R}=(r_{ij})_{n \times n}$ with $r_{ij}=\{ a_{ik}a_{kj} | k=1,\ldots,n \}$, then the index $AI$ is
\begin{equation}
\label{eq:AI}
AI(\mathbf{A})=\frac{2}{n(n-1)} \sum_{i=1}^{n-1} \sum_{j=i+1}^{n} \frac{\max(r_{ij})-\min(r_{ij})}{(1+\max(r_{ij}))(1+\min(r_{ij}))}.
\end{equation}
\end{definition}

The interpretation of $AI$ is original and different from those of other indices. Consider that $r_{ij}$ is \emph{not} a real number but, instead, the set of possible values of $a_{ij}$ as could be deduced from indirect comparisons $a_{ik}a_{kj} \, \forall k$.\\
For example, given the matrix
\[
\mathbf{A}=
\begin{pmatrix}
1     &     2    &   3     & 1/2 \\
1/2   &     1    &   4     & 1/3 \\
1/3   &     1/4  &   1     & 2   \\
2     &     3    &   1/2   & 1
\end{pmatrix},
\]
we have
\[
r_{14}= \{ a_{11}a_{14}, a_{12}a_{24}, a_{13}a_{34}, a_{14}a_{44}   \} = \left\{ \frac{1}{2},\frac{2}{3},6 \right\},
\]
from which we obtain $\max(r_{14})=6$ and $\min(r_{14})=1/2$.\\
It is possible to build an interval-valued matrix
\[
\bar{\mathbf{A}}= \left( \bar{a}_{ij} \right)_{n \times n}=\left( [ \min(r_{ij} ) , \max(r_{ij})] \right)_{n \times n}
\]
such that the `true value' of the comparison between $x_{i}$ and $x_{j}$ shall lie in the interval $\bar{a}_{ij}$. The larger the intervals are, the more inconsistent the matrix, and in fact $AI$ is a normalized sum of the lengths of the intervals $\bar{a}_{ij}$. The following shows that $AI$ satisfies all properties except P3.

\begin{proposition}
\label{prop:AI}
Index $AI$ satisfies P1, P2 and P4--P6, but not P3.
\end{proposition}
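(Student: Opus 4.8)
The plan is to dispatch P1, P2, P5 and P6 by directly inspecting formula (\ref{eq:AI}), then to establish P4 by a local analysis near a consistent matrix, and finally to refute P3 with a small explicit matrix. For the four easy properties: since $a_{ij}\in r_{ij}$ always (take $k=i$ or $k=j$), each summand of (\ref{eq:AI}) is nonnegative and $AI(\mathbf{A})=0$ precisely when every $r_{ij}$ is a singleton, i.e.\ $a_{ik}a_{kj}$ is independent of $k$; comparing with the value at $k=i$ shows this is exactly (\ref{eq:consistency}), so P1 holds with $\nu=0$ (uniqueness of $\nu$ being automatic since $\mathcal{A}^{*}\neq\emptyset$). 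P2 holds because a simultaneous row--column permutation merely relabels the unordered pairs $\{i,j\}$ and their sets $r_{ij}$, leaving the symmetric double sum unchanged. P5 is immediate: $\max(r_{ij})$ and $\min(r_{ij})$ are max/min of finitely many continuous functions of the entries, and the denominator is continuous and $>1$. For P6, the entries of $\mathbf{A}^{T}$ are $a'_{ij}=a_{ji}$, so $r'_{ij}=\{a_{jk}a_{ki}:k\}=\{1/t:t\in r_{ij}\}$; hence $\max(r'_{ij})=1/\min(r_{ij})$ and $\min(r'_{ij})=1/\max(r_{ij})$, and writing $m=\min(r_{ij})$, $M=\max(r_{ij})$ one checks $\frac{1/m-1/M}{(1+1/m)(1+1/M)}=\frac{M-m}{(1+m)(1+M)}$, so every summand is unchanged and $AI(\mathbf{A}^{T})=AI(\mathbf{A})$.

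For P4, take $\mathbf{A}\in\mathcal{A}^{*}$, perturb $a_{pq}\mapsto a_{pq}^{\delta}$ and $a_{qp}\mapsto a_{qp}^{\delta}$, and set $t:=a_{pq}\neq1$. The structural observation is that the only sets $r_{ij}$ that change are those with $\{i,j\}\cap\{p,q\}\neq\emptyset$, and each of these becomes a two-element set of the shape $\{c,\,c\,t^{\pm(\delta-1)}\}$ for some $c>0$: for the pair $\{p,q\}$ it is $\{t,t^{\delta}\}$, while for a pair meeting $\{p,q\}$ in exactly one index the single changed indirect product is the $k=q$ (or $k=p$) term, which by consistency of the unperturbed matrix equals the unchanged value multiplied by $t^{\delta-1}$ (or $t^{-(\delta-1)}$). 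All other pairs keep their singleton sets and contribute $0$, independently of $\delta$. It then remains to analyse, for fixed $c>0$, the single-pair contribution $h_{c}(u):=\frac{c\,|u-1|}{(1+\max(c,cu))(1+\min(c,cu))}$, which equals $\frac{c(u-1)}{(1+c)(1+cu)}$ for $u\ge1$ and $\frac{c(1-u)}{(1+c)(1+cu)}$ for $u\le1$; a short differentiation gives $h_{c}'(u)=\frac{c}{(1+cu)^{2}}>0$ for $u>1$ and $h_{c}'(u)=\frac{-c}{(1+cu)^{2}}<0$ for $u<1$, so $h_{c}$ is ``V-shaped'' with minimum $0$ at $u=1$. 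Since $t\neq1$, the exponent $u(\delta)=t^{\pm(\delta-1)}$ moves monotonically away from $1$ as $\delta$ moves monotonically away from $1$, hence each affected summand is non-decreasing under both $\delta'>\delta>1$ and $\delta'<\delta<1$, and therefore so is $AI$. I expect the bookkeeping of which $r_{ij}$ change, and in what way, to be the main obstacle here.

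For the failure of P3, I would exhibit the $3\times3$ matrix
\[
\mathbf{A}=\begin{pmatrix}1 & 9 & 27\\ 1/9 & 1 & 9\\ 1/27 & 1/9 & 1\end{pmatrix},
\]
for which $r_{12}=\{3,9\}$, $r_{13}=\{27,81\}$, $r_{23}=\{3,9\}$, so with the normalizing factor $\tfrac{2}{n(n-1)}=\tfrac13$ one gets $AI(\mathbf{A})=\tfrac13\bigl(\tfrac{6}{40}+\tfrac{54}{2296}+\tfrac{6}{40}\bigr)\approx0.108$, whereas $\mathbf{A}(2)$ (all entries squared) has $r_{12}=\{9,81\}$, $r_{13}=\{729,6561\}$, $r_{23}=\{9,81\}$ and $AI(\mathbf{A}(2))=\tfrac13\bigl(\tfrac{72}{820}+\tfrac{5832}{4790260}+\tfrac{72}{820}\bigr)\approx0.059<AI(\mathbf{A})$. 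Hence $AI(\mathbf{A}(b))\not\ge AI(\mathbf{A})$ for $b=2\ge1$, so P3 fails. The heuristic worth recording is that, although $h_{c}$ is monotone in $u$ on either side of $u=1$, the effect of raising \emph{all} entries to a common power is not monotone: when $c$ is large, raising already-large entries to a power pushes the dominant summands back down their ``hump'', and the only real task is to choose numbers for which this decrease outweighs the simultaneous growth of the common prefactor measuring the size of the inconsistency --- as the matrix above does.
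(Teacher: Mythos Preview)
Your proposal is correct and follows essentially the same approach as the paper: P1, P2, P5, P6 are dispatched directly (your P6 argument is in fact more explicit than the paper's ``it can be checked''), P4 is handled via the same structural observation that only pairs meeting $\{p,q\}$ yield two-element sets $\{c,c\,t^{\pm(\delta-1)}\}$ and the same derivative computation $h_c'(u)=\pm c/(1+cu)^2$, and P3 is refuted by an explicit $3\times3$ counterexample. The only cosmetic difference is your choice of counterexample for P3 (the paper uses $\mathbf{A}=\bigl(\begin{smallmatrix}1&2&8\\1/2&1&2\\1/8&1/2&1\end{smallmatrix}\bigr)$ and compares $b=2$ with $b=3$), but yours works just as well.
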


\begin{proof}
We shall prove all the properties separately.
\begin{description}
	\item[P1]: Assuming $\nu=0$, then we should prove $AI(\mathbf{A})=0 \Leftrightarrow \mathbf{A} \in \mathcal{A}^{*}$.\\
($\Rightarrow$): As all the terms of the sum in (\ref{eq:AI}) are non-negative, if $AI(\mathbf{A})=0$, then they must all be equal to zero. Such terms equal zero only when all the numerators equal zero, i.e. when $\max(r_{ij})=\min(r_{ij})~\forall i<j$, which implies that $\mathbf{A}\in\mathcal{A}^{*}$. \\
($\Leftarrow$): If $\mathbf{A} \in \mathcal{A}^{*}$, then all the elements $r_{ij}$ are singletons and therefore $\max(r_{ij})=\min(r_{ij})~\forall i<j$, implying that the numerators in (\ref{eq:AI}) equals zero and $AI(\mathbf{A})=0$
\item[P2]: Straightforward.
\item[P3]: It is sufficient to consider the following matrix $\mathbf{A}$ and its derived $\mathbf{A}(2)$ and $\mathbf{A}(3)$
\begin{equation}
\label{eq:notA3}
\mathbf{A}=
\begin{pmatrix}
1 & 2 & 8 \\
1/2 & 1 & 2 \\
1/8 & 1/2 & 1
\end{pmatrix}
\hspace{0.36cm}
\mathbf{A}(2)=
\begin{pmatrix}
1 & 2^{2} & 8^{2} \\
1/2^{2} & 1 & 2^{2} \\
1/8^{2} & 1/2^{2} & 1
\end{pmatrix}
\hspace{0.36cm}
\mathbf{A}(3)=
\begin{pmatrix}
1 & 2^{3} & 8^{3} \\
1/2^{3} & 1 & 2^{3} \\
1/8^{3} & 1/2^{3} & 1
\end{pmatrix}
\end{equation}
and observe that $I(\mathbf{A}(2))\approx 0.108$ and $I(\mathbf{A}(3))\approx 0.068$. Hence $I(\mathbf{A}(2))>I(\mathbf{A}(3))$ and P3 is not satisfied.

\item[P4]: Given $a_{12},a_{23},\ldots,a_{n-1 \, n}$, a \emph{consistent} pairwise comparison matrix of order $n$ can be equivalently written as
\begin{equation}
\mathbf{A}=
\begin{pmatrix}
1                                      & a_{12}                                & a_{12}a_{23}     & \cdots  & a_{12}\cdot \ldots \cdot a_{n-1 \, n}  \\
\frac{1}{a_{12}}                       &    1                                  &     a_{23}       & \cdots  & a_{23} \cdot \ldots \cdot a_{n-1 \, n}  \\
\cdots                                 & \cdots                                & \cdots                             & \cdots  & \cdots    \\
\frac{1}{a_{12} \cdot \ldots \cdot a_{n-2 \, n-1}} & \frac{1}{a_{23} \cdot \ldots \cdot a_{n-2 \, n-1}}& \frac{1}{a_{34}\cdot \ldots \cdot a_{n-2 \, n-1}}                                  & \cdots  & a_{n-1 \, n} \\
\frac{1}{a_{12} \cdot \ldots \cdot a_{n-1 \, n}}   & \frac{1}{a_{23} \cdot \ldots \cdot a_{n-1 \, n}}  & \frac{1}{a_{34} \cdot \ldots \cdot a_{n-1 \, n}}& \cdots  & 1
\end{pmatrix} \in \mathcal{A}^{*}
\end{equation}
or, more compactly, as $\mathbf{B}=(b_{ij})_{n\times n}$ where
\[
b_{ij}=
\begin{cases}
\prod_{p=i}^{j-1}a_{p \, p+1},   & \forall i < j \\
1,                               & \forall i = j \\
1/ \prod_{p=i}^{j-1}a_{p \, p+1},& \forall i > j
\end{cases}
\]
Then, each element of the auxiliary matrix $\mathbf{R}$ is as follows
\[
r_{ij}= \{ b_{ik} b_{kj} | k=1,\ldots,n \}~~\forall i,j .
\]
Now, to test P4, without loss of generality, we fix the pair $(1,n)$ and replace $a_{1n}$ and $a_{n1}$ with $a_{1n}^{\delta}$ and $a_{n1}^{\delta}$, respectively. Consequently, $b_{1n}$ and $b_{n1}$ are replaced by $b_{1n}^{\delta}$ and $b_{n1}^{\delta}$. Hence, for all $i<j$
\[
r_{ij}=
\begin{cases}
 \left\{ b_{ij}  \right\}, &\forall i,j \notin \{ 1,n \} \\
 \left\{ b_{ij},b_{1n}^{\delta}\frac{b_{ij}}{b_{1n}} \right\}, & \text{otherwise.}
\end{cases}
\]
Considering the definition of $AI$ we reckon that the terms associated with $r_{ij}$ for $i,j \notin \{ 1,n \}$ equals zero. Therefore, we shall prove that all the other terms are non-decreasing functions of $\delta$. We can rewrite
\[
\left\{ b_{ij},b_{1n}^{\delta}\frac{b_{ij}}{b_{1n}} \right\}= \left\{ b_{ij},b_{ij}b_{1n}^{\delta - 1} \right\}
\]
and with $x:=b_{ij},~y:=b_{1n},~\mu=\delta-1 $, it boils down to prove that
\begin{equation}
\label{eq:quantities}
\frac{\max \{x,x y^{\mu} \} - \min \{x,x y^{\mu} \}}{(1+\max \{x,x y^{\mu} \}) (1+\min \{x,x y^{\mu}\})}
\end{equation}
is a non-decreasing function of $\mu>0$ when also $x,y>0$. Now we should examine the two cases (i) $x<xy^{\mu}$ and (ii) $x>xy^{\mu}$. We start with $x<xy^{\mu}$ and, considering that
\[
x y^{\mu}>x \Leftrightarrow y^{\mu}>1 \Leftrightarrow y>1
\]
and that therefore, for the case $xz>x$, $y^{\mu}$ is always an increasing function of $\mu$. Hence, we can substitute $y^{\mu}$ with $z>1$ and (\ref{eq:quantities}) can be replaced by
\begin{equation}
\label{eq:quantities2}
\frac{\max \{x,x z \} - \min \{x,x z \}}{(1+\max \{x,x z \}) (1+\min \{x,x z\})}~~~(x>0, z>1).
\end{equation}
Considering that we are in the case with $xy>x$, we simplify (\ref{eq:quantities2}), and obtain
\begin{equation}
\phi_{(i)}=\frac{ xz-x}{(1+xz ) (1+x)}.
\end{equation}
So now we shall prove that $\frac{\partial \phi_{(i)}}{\partial z}$ is positive for all $x>0,z>1$.

\begin{align*}
\frac{\partial \phi_{(i)}}{\partial z} &= \frac{x}{(1+x)(1+xz)} - \frac{x(xz-x)}{(1+x)(1+xz)^{2}}\\
                                   ~   &= \frac{x(1+xz)-x(xz-x)}{(1+x)(1+xz)^{2}} \\
                                   ~   &= \frac{x(1+x)}{(1+x)(1+xz)^{2}} \\
                                   ~   &= \frac{x}{(1+xz)^{2}}.
\end{align*}

This last quantity is always positive for $x>0$. A very similar result can be derived for the case (ii) $x>xy$ and thus $AI$ satisfies P4.
\end{description}
It can be checked that $AI$ also satisfies properties P5 and P6. 
\end{proof}

Although in its present form index $AI$ does not satisfy P3, the underlying idea is ingenious and it is sufficient to adjust it, i.e. discard the normalization at the denominator, to make it satisfy P3.

\begin{proposition}
The inconsistency index
\[
AI^{*}(\mathbf{A})=\frac{1}{n(n-1)} \sum_{i=1}^{n} \sum_{j=1}^{n} \left( \max(r_{ij})-\min(r_{ij}) \right)
\]
satisfies properties P1--P6.
\end{proposition}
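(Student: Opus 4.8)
The plan is to check P1--P6 one at a time, inheriting most of the work from the proof of Proposition~\ref{prop:AI} and isolating the single point that is genuinely new.

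\textbf{The routine properties (P1, P2, P5, P6).} Take $\nu=0$. Every summand $\max(r_{ij})-\min(r_{ij})$ is non-negative, so $AI^{*}(\mathbf{A})=0$ iff each is $0$ iff every $r_{ij}$ is a singleton; since $a_{ii}a_{ij}=a_{ij}\in r_{ij}$, that common value must be $a_{ij}$, whence $a_{ik}a_{kj}=a_{ij}$ for all $i,j,k$, i.e.\ $\mathbf{A}\in\mathcal{A}^{*}$, and the converse is the same computation read backwards; no $\nu\neq 0$ can work because $AI^{*}$ vanishes exactly on $\mathcal{A}^{*}$. This gives P1. P2 holds because a permutation merely relabels indices, sending $r_{ij}$ to $r_{\sigma(i)\sigma(j)}$ and leaving the multiset of summands — hence the sum — unchanged. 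P5 holds because each product $a_{ik}a_{kj}$ is continuous in the entries, so the finitely many maxima and minima, their differences, and their scaled sum are continuous. For P6, note that $(\mathbf{A}^{T})_{ij}=a_{ji}$ gives $r_{ij}(\mathbf{A}^{T})=\{a_{ki}a_{jk}\mid k\}=r_{ji}(\mathbf{A})$, so the $(i,j)$ summand for $\mathbf{A}^{T}$ equals the $(j,i)$ summand for $\mathbf{A}$; because $AI^{*}$ now runs over \emph{all} ordered pairs $(i,j)$ — this is precisely why the definition uses $\tfrac{1}{n(n-1)}\sum_{i,j}$ rather than $\tfrac{2}{n(n-1)}\sum_{i<j}$ as in $AI$ — reindexing $(i,j)\leftrightarrow(j,i)$ leaves the total invariant.

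\textbf{P4.} By P2 assume $(p,q)=(1,n)$ and start from a consistent $\mathbf{B}$ written as in the proof of Proposition~\ref{prop:AI}. Replacing $a_{1n},a_{n1}$ by $a_{1n}^{\delta},a_{n1}^{\delta}$ leaves $r_{ij}=\{b_{ij}\}$ (a zero summand) whenever $i,j\notin\{1,n\}$, and otherwise turns $r_{ij}$ into a two-element set $\{b_{ij},\,b_{ij}t^{\pm(\delta-1)}\}$ with $t:=a_{1n}$, the two signs distributed so that each reciprocal pair receives opposite exponents. Hence every nonzero summand equals $b_{ij}\,\bigl|\,t^{\pm(\delta-1)}-1\,\bigr|$, and a one-line derivative check shows that for fixed $t>0$ the map $s\mapsto|t^{s}-1|$ is non-decreasing on $s>0$ and non-increasing on $s<0$; substituting $s=\delta-1$ and $s=1-\delta$ and tracking the signs shows that each such summand is non-decreasing in $\delta$ for $\delta>1$ and increases as $\delta$ drops below $1$, which is exactly (\ref{monotonicity}). (Here $t\neq1$ because P4 assumes $a_{pq}\neq1$.)

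\textbf{P3 — the obstacle.} Writing $M_{ij}:=\max(r_{ij})$, $m_{ij}:=\min(r_{ij})$ and using that $x\mapsto x^{b}$ is increasing, the $(i,j)$ summand of $AI^{*}(\mathbf{A}(b))$ is $M_{ij}^{b}-m_{ij}^{b}$. The catch is that, unlike in P4, $r_{ij}$ now comes from an arbitrary matrix, so $M_{ij}<1$ is possible and then $b\mapsto M_{ij}^{b}-m_{ij}^{b}$ is actually \emph{decreasing}; a summand-by-summand argument therefore fails. The remedy is to pair $(i,j)$ with $(j,i)$: since $r_{ji}=\{1/v:v\in r_{ij}\}$ we have $M_{ji}=1/m_{ij}$ and $m_{ji}=1/M_{ij}$, so the two summands together contribute
\[
h_{ij}(b)=\bigl(M_{ij}^{b}-M_{ij}^{-b}\bigr)-\bigl(m_{ij}^{b}-m_{ij}^{-b}\bigr).
\]
With $G(t,b):=t^{b}-t^{-b}$ this is $h_{ij}(b)=G(M_{ij},b)-G(m_{ij},b)$, and $\partial_{b}G(t,b)=(\log t)(t^{b}+t^{-b})$, which under $t=e^{u}$ reads $2u\cosh(bu)$ — strictly increasing in $u$, hence strictly increasing in $t$. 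Since $M_{ij}\geq m_{ij}$, this gives $\partial_{b}h_{ij}(b)=\partial_{b}G(M_{ij},b)-\partial_{b}G(m_{ij},b)\geq0$, so each reciprocal pair is non-decreasing in $b$; because the off-diagonal summands of $AI^{*}$ come precisely in such pairs (the diagonal ones vanish), $AI^{*}(\mathbf{A}(b))$ is non-decreasing for $b\geq1$, i.e.\ P3 holds. I expect this pairing step, together with the monotonicity of $t\mapsto(\log t)(t^{b}+t^{-b})$, to be the only part requiring any thought; the rest is bookkeeping inherited from the proof of Proposition~\ref{prop:AI}.
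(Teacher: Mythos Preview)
Your proof is correct, and on the one point that matters---P3---it is sharper than the paper's own argument. For P1, P2, P4, P5, P6 you follow the same path as the paper (which simply cites the proof of Proposition~\ref{prop:AI}); your P4 computation is just a spelled-out, numerator-only version of that earlier proof.

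For P3 the paper argues summand by summand, asserting that each term $\max(r_{ij})^{b}-\min(r_{ij})^{b}$ is non-decreasing in $b>1$ merely because it is non-negative. You correctly flag that this fails whenever $\max(r_{ij})<1$: take for instance $a_{12}=1/4,\ a_{13}=1/8,\ a_{23}=1/3$, so that $r_{12}=\{1/4,3/8\}$ and the $(1,2)$ summand drops from $1/8$ at $b=1$ to $5/64$ at $b=2$. Your repair---pairing $(i,j)$ with $(j,i)$ via $r_{ji}=\{1/v:v\in r_{ij}\}$ and reducing to the monotonicity in $t$ of $(\log t)(t^{b}+t^{-b})$---is exactly what is needed, and it is here that the double sum $\sum_{i}\sum_{j}$ (rather than $\sum_{i<j}$) in the definition of $AI^{*}$ is doing real work. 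So the divergence from the paper is not stylistic: your pairing argument closes a genuine gap in the published proof.
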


\begin{proof}
It follows from the proof of Proposition \ref{prop:AI} that $AI^{*}$ satisfies P1, P2, P4, P5, and P6. To show that P3 is satisfied, it is sufficient to take the arguments of the sum $\sum_{i=1}^{n} \sum_{j=1}^{n} \left( \max(r_{ij})-\min(r_{ij}) \right)$ and consider that they are all non-negative, since $\max(r_{ij}) \geq \min(r_{ij}) ~\forall i,j$. Consequently, the terms $\left( \max(r_{ij})^{b} - \min(r_{ij})^{b}\right) \geq 0$ are monotone non-decreasing functions with respect to $b>1$, and P3 is satisfied. 
\end{proof}

\begin{example}
Consider the pairwise comparison matrix $\mathbf{A}$ in (\ref{eq:notA3}) and its associated $\mathbf{A}(b)=\left( a_{ij}^{b} \right)_{3 \times 3}$. Figure \ref{fig:AI} contains the plots of $AI$ and $AI^{*}$ for $\mathbf{A}(b)$ as functions of $b$ and shows their different behaviors.
\begin{figure}[h!bt]
\centering
\subfloat[\footnotesize{Index $AI$ can be decreasing w.r.t $b$, and even tend to $0$, when $b \rightarrow \infty$.}]{
\includegraphics[width=0.43\textwidth]{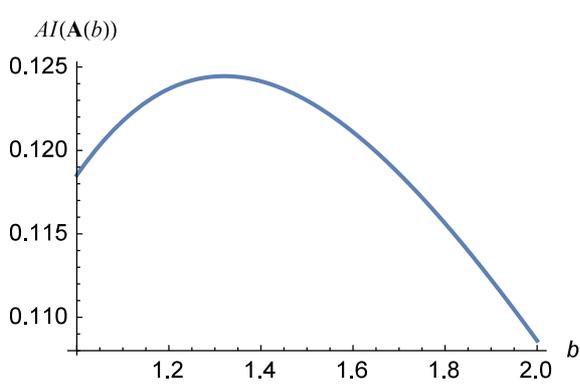} 
} \hspace{1cm}
\subfloat[\footnotesize{Index $AI^{*}$ is monotone non-decreasing w.r.t. $b$.}]{
\includegraphics[width=0.43\textwidth]{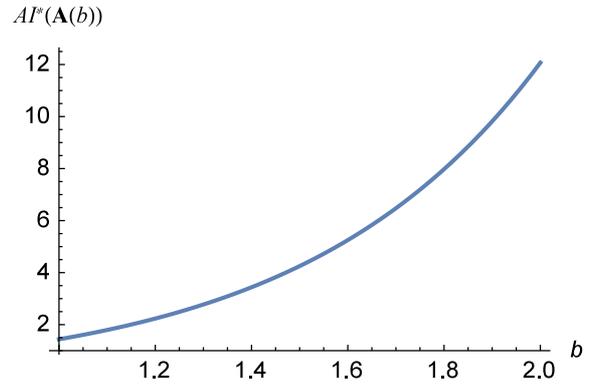}
}
\caption{Comparison between $AI$ and $AI^{*}$ with respect to P3. \label{fig:AI}}
\end{figure}
\end{example}

%
%

\subsection{Index by Wu and Xu}

\citet{WuXu2012} defined their inconsistency index using some properties of the Hadamard product of positive matrices. 
\begin{definition}[Index by \citet{WuXu2012}]
\label{def:D}
The index defined by Wu and Xu is
\[
CI_{H}(\mathbf{A})=\frac{1}{n^{2}} \sum_{i=1}^{n}\sum_{j=1}^{n} a_{ij}g_{ji} \, ,
\]
where $g_{ij}= \left( \prod_{k=1}^{n} a_{ik}a_{kj} \right)^{\frac{1}{n}}$.
\end{definition}
Note that the matrix $\mathbf{G}=\left( g_{ij} \right)_{n \times n} \in \mathcal{A}^{*}$ can be interpreted as a consistent approximation of $\mathbf{A}$. In the original paper $CI_{H}$ was used in a mathematical model to manage consistency and consensus at once. Until now, no formal or numerical analysis has been made on $CI_{H}$ and there is no information on its properties. However, with the following proposition we show that it satisfies P1--P6.
\begin{proposition}
Index $CI_{H}$ satisfies the properties P1--P6.
\end{proposition}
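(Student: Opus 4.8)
The plan is to verify each of the six properties in turn for $CI_{H}(\mathbf{A})=\frac{1}{n^{2}}\sum_{i,j} a_{ij}g_{ji}$, where $g_{ij}=\left(\prod_{k=1}^{n} a_{ik}a_{kj}\right)^{1/n}$. The first observation to establish, which will be used repeatedly, is a clean expression for the summands. Writing $a_{ij}g_{ji}=a_{ij}\left(\prod_{k} a_{jk}a_{ki}\right)^{1/n}$, one sees that this quantity depends on the triads $(i,j,k)$ through the ratios $\frac{a_{ij}}{a_{ik}a_{kj}}$ (or their reciprocals), so that $a_{ij}g_{ji}=\left(\prod_{k=1}^{n}\frac{a_{ik}a_{kj}}{a_{ij}}\right)^{-1/n}\cdot(\text{something symmetric})$; more usefully, $a_{ij}g_{ji}=\left(\prod_{k}\frac{a_{ij}}{a_{ik}a_{kj}}\right)^{1/n}\cdot$ correction, and I would massage this into the form $a_{ij}g_{ji}+a_{ji}g_{ij}=\sum$ of terms of the shape $t^{1/n}+t^{-1/n}$ with $t=a_{ij}/(a_{ik}a_{kj})$. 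The key elementary fact is that $f(t)=t^{1/n}+t^{-1/n}\geq 2$ with equality iff $t=1$, and that $f$ is increasing in $|\log t|$. This single lemma drives P1, P3 and P4.

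For P1, I would take $\nu = 1$: pairing the $(i,j)$ and $(j,i)$ summands and using the AM–GM / convexity fact above shows $CI_H(\mathbf{A})\geq \frac{1}{n^2}\sum_{i,j} 1 = 1$, with equality forcing every triad ratio $a_{ij}/(a_{ik}a_{kj})$ to equal $1$, i.e. $\mathbf{A}\in\mathcal{A}^{*}$; conversely consistency makes $\mathbf{G}=\mathbf{A}$ and a direct computation gives $CI_H=1$. P2 is immediate since permuting alternatives permutes the index set of the double sum and permutes $\mathbf{G}$ accordingly. P5 holds because $g_{ij}$ is a continuous (indeed smooth) positive function of the entries and finite sums/products of continuous functions are continuous. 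P6 follows because transposing $\mathbf{A}$ sends $a_{ij}\mapsto a_{ji}$ and $g_{ij}\mapsto g_{ji}$, hence $a_{ij}g_{ji}\mapsto a_{ji}g_{ij}$, and summing over all $(i,j)$ is invariant under this swap.

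For P3, I would write each paired term as $h(b)=t^{b/n}+t^{-b/n}$ with $t=a_{ij}/(a_{ik}a_{kj})>0$ fixed; then $h'(b)=\frac{1}{n}\log t\,(t^{b/n}-t^{-b/n})\geq 0$ for $b\geq 1$ since $\log t$ and $t^{b/n}-t^{-b/n}$ always share the same sign. Summing over all triads and all pairs $(i,j)$ gives $\frac{d}{db}CI_H(\mathbf{A}(b))\geq 0$, so $CI_H$ is non-decreasing in $b$. For P4, starting from a consistent $\mathbf{A}$ and replacing $a_{pq},a_{qp}$ by $a_{pq}^{\delta},a_{qp}^{\delta}$, I would identify exactly which summands $a_{ij}g_{ji}$ change — only those triads and pairs involving the index pair $\{p,q\}$ — and show each such perturbed term is of the form $t^{(\delta-1)/n}+t^{-(\delta-1)/n}$ (or a monotone reparametrisation thereof), again monotone in $|\delta-1|$ by the same lemma, while all other summands are unaffected. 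The main obstacle I anticipate is the bookkeeping in P4: carefully cataloguing which entries of $\mathbf{G}$ and which summands depend on $a_{pq}$, keeping track of the multiplicities (the pair $(p,q)$ appears both as a matrix entry $a_{pq}$ and inside $g_{ji}$ for various $(i,j)$), and confirming that after collecting terms every dependence on $\delta$ enters only through $|\delta-1|$ in a monotone way; getting the exponents and the pairing right here is the delicate part, whereas P1, P2, P3, P5, P6 are comparatively routine once the $t^{1/n}+t^{-1/n}$ observation is in hand.
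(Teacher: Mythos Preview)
Your overall strategy is exactly the paper's: pair the $(i,j)$ and $(j,i)$ summands, use that $a_{ji}g_{ij}=1/(a_{ij}g_{ji})$, and let the convexity and monotonicity of $x\mapsto x+x^{-1}$ do the work for P1, P3 and P4. There is, however, a real muddle in your ``clean expression'' that you should fix before it propagates. The paired term is \emph{not} a sum over $k$ of shapes $t_k^{1/n}+t_k^{-1/n}$; it is a single quantity $x+x^{-1}$ with
\[
x=a_{ij}g_{ji}=\Bigl(\prod_{k=1}^{n}\tfrac{a_{ij}}{a_{ik}a_{kj}}\Bigr)^{1/n},
\]
the geometric mean (not sum) of the triad ratios. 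This matters in two places. In P1, equality in $x+x^{-1}\ge 2$ only yields $a_{ij}g_{ji}=1$ for every pair, i.e.\ $\prod_k t_k=1$; it does \emph{not} by itself force each individual triad ratio to equal~$1$ as you claim. The missing step---which the paper also glosses over---is that $g_{ij}=w_i/w_j$ with $w_i=\bigl(\prod_k a_{ik}\bigr)^{1/n}$, so $\mathbf{G}\in\mathcal{A}^{*}$ always, and hence $a_{ij}g_{ji}=1$ for all $i,j$ means $\mathbf{A}=\mathbf{G}\in\mathcal{A}^{*}$. In P3, the paired term for $\mathbf{A}(b)$ is $(a_{ij}g_{ji})^{b}+(a_{ij}g_{ji})^{-b}$, not $t^{b/n}+t^{-b/n}$ for an individual triad $t$; fortunately your derivative argument applies verbatim to the corrected form, so the conclusion survives. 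Your plan for P4 and the handling of P2, P5, P6 match the paper and are fine: after the single perturbation the nontrivial paired terms become $\beta^{1/n}+\beta^{-1/n}$ for pairs meeting $\{p,q\}$ in one index and $\beta^{(n-2)/n}+\beta^{-(n-2)/n}$ for the pair $(p,q)$ itself, with $\beta=a_{pq}^{\delta-1}$, all monotone in $|\delta-1|$.
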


\begin{proof}
We shall show that P1 is satisfied, with $\nu=1$. First we need to prove that $CI_{H}(\mathbf{A})=1 \Rightarrow \mathbf{A}\in \mathcal{A}^{*}$.
\begin{equation*}
CI_{H}(\mathbf{A})= \frac{1}{n^{2}} \sum_{i=1}^{n}\sum_{j=1}^{n} a_{ij}g_{ji} = \frac{1}{n}+\frac{1}{n^{2}}\sum_{i=1}^{n-1}\sum_{j=i+1}^{n}
\underbrace{\left( a_{ij}g_{ji}+\frac{1}{a_{ij}g_{ji}} \right)}_{\psi(a_{ij},g_{ji})}
\end{equation*}
Now it can be seen that each function $\psi(a_{ij},g_{ji})$ attains its global minimum, equal to $2$, when $a_{ij}g_{ji}=1$, which is a restatement of the consistency condition. In this case, to receive the hint that $\nu = 1$, it is enough to simplify the sum,
\[
CI_{H}(\mathbf{A})
=
\frac{1}{n}+\frac{1}{n^{2}}\sum_{i=1}^{n-1}\sum_{j=i+1}^{n}
2
= \frac{1}{n}+ \frac{1}{n^{2}} \cdot \frac{n(n-1)}{2} \cdot 2 = 1
\]
Now in the other direction, $\mathbf{A}\in \mathcal{A}^{*} \Rightarrow CI_{H}(\mathbf{A})=1$, it suffices to expand $CI_{H}(\mathbf{A})$
\[
CI_{H}(\mathbf{A}) = \frac{1}{n^{2}} \sum_{i=1}^{n}\sum_{j=1}^{n} \left( a_{ij} \left( \prod_{k=1}^{n} \frac{1}{a_{ik}a_{kj}} \right)^{1/n} \right).
\]
Since consistency implies $a_{ik}a_{kj}=a_{ij}$ we have 
\[
CI_{H}(\mathbf{A}) = \frac{1}{n^{2}} \sum_{i=1}^{n}\sum_{j=1}^{n} \left( a_{ij} \frac{1}{a_{ij}} \right) = \frac{1}{n^{2}} \sum_{i=1}^{n}\sum_{j=1}^{n}  1  = 1.
\]
It is simple, and thus omitted, to show that P2, P5 and P6 hold. To prove P3, we shall call $(ag)_{ij}=a_{ij}^{b}g_{ji}^{b}$. By expanding $g_{ji}$,
\[
(ag)_{ij}=a_{ij}^{b} \left( a_{j1}^{b}a_{1i}^{b}\cdot a_{j2}^{b}a_{2i}^{b} \cdot \ldots \cdot a_{jn}^{b}a_{ni}^{b} \right)^{1/n}
=( \underbrace{a_{ij} g_{ji}}_{>0} )^{b}.
\]
Since $(ag)_{ij}=1/(ag)_{ji}$, by summing $(ag)_{ij}$ and $(ag)_{ji}$ we obtain
\[
(ag)_{ij} + (ag)_{ji} =  \left( a_{ij} g_{ji} \right)^{b} + \frac{1}{\left( a_{ij} g_{ji}  \right)^{b}} ~~\forall i,j,
\]
which is an increasing function for $b>0$. Since this holds for the general pair of indices $\{ i,j \}$, the index satisfies P3.\\
To prove P4, assume, without loss of generality, that the element to be modified is $a_{1n}$. For sake of simplicity, we can modify it and its reciprocal by multiplying them by $\beta > 0$. All the $(ag)_{ij}$ with $i,j \notin \{ 1,n\}$ will be equal to 1. For the entries with one index $i,j$ equal to either $1$ or $n$ we have
\[
(ag)_{ij}=a_{ij}(\underbrace{a_{j1}a_{1i}\cdot a_{j2}a_{2i} \cdot \ldots \cdot a_{jn}a_{ni}}_{a_{ji}^{n}} \beta)^{1/n},
\]
meaning that $g_{ji}=a_{ji}\beta^{1/n}$. As we know that $g_{ij}=1/g_{ji}$, by summing $(ag)_{ij}$ and $(ag)_{ji}$ and simplifying, one obtains
\[
\frac{1}{\beta^{1/n}} + \beta^{1/n},
\]
which is a strictly convex function for $\beta > 0$ with minimum in $\beta=1$. Similarly, for $(ag)_{1n}$ and $(ag)_{n1}$, it is
\[
(ag)_{1n} + (ag)_{n1} = \frac{1}{\beta^{2/n}} + \beta^{2/n},
\]
which shares the same property. 
\end{proof}

\subsection{Cosine Consistency Index and other indices}

Many times it is not easy to prove whether an index satisfies some properties, but numerical tests and counterexamples can always be used to show that the index does not. This was the case with the Cosine Consistency Index.
\begin{definition}[Cosine Consistency Index \citep{KouLin2014}]
\label{def:CCI}
The Cosine Consistency Index is
\[
CCI(\mathbf{A})=\sqrt{\sum_{i=1}^{n} \left( \sum_{j=1}^{n}  b_{ij} \right)^{2}} \Bigg/ n ,
\]
where $b_{ij}=a_{ij} \big/ \sqrt{\sum_{k=1}^{n} a_{kj}^{2}}$.
\end{definition}
Note that $CCI(\mathbf{A})\in [0,1]$ and its interpretation is reversed, meaning that the greater its value the \emph{less} inconsistent $\mathbf{A}$ is. It is simple, and it can also be found in the original paper, to show that $CCI$ satisfies P1, P2, P5, and P6.  For instance, in the case of P1, the proof comes directly from Equation 6 and Theorem 3 in the paper by \citet{KouLin2014}. However, the following counterexample suffices to show that $CCI$ does \emph{not} satisfy P3.
\begin{example}
Consider the matrix
\begin{equation}
\label{eq:A_CCI}
\mathbf{A}=
\begin{pmatrix}
1 & 3 & 7 \\
1/3 & 1 & 1/2 \\
1/7 & 2 & 1
\end{pmatrix}
\end{equation}
and its associated $\mathbf{A}(b)=(a_{ij}^{b})_{n \times n}$. The plot of $CCI(\mathbf{A}(b))$ is reported in Figure \ref{fig:CCI} and shows that $CCI$ does not satisfy P3.
\begin{figure}[htbp]
	\centering
		\includegraphics[width=0.43\textwidth]{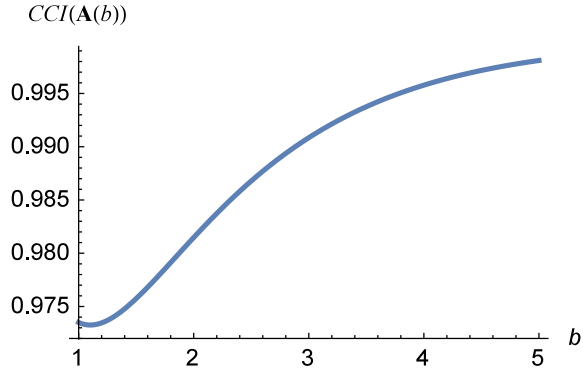}
	\caption{For the matrix $\mathbf{A}$ in (\ref{eq:A_CCI}), an intensification of preferences decreases the inconsistency.}
	\label{fig:CCI}
\end{figure}
\end{example}

Often, although in their present forms they do not satisfy P1--P6, ideas behind indices are valid and slight modifications are sufficient to make them satisfy a set of properties. One example is the index $NI_{n}^{\sigma}$ proposed by \citet{RamikKorviny2010} which was later studied \citep{Brunelli2011}. Another concrete example is the Relative Error index by \citet{Barzilai1998} which does not satisfy P4 and P5.
In its original formulation such index is
\[
RE(\mathbf{A})=\frac{\sum_{i=1}^{n}\sum_{j=1}^{n}\left( p_{ij} - d_{i} + d_{j} \right)^{2}}{ \sum_{i=1}^{n}\sum_{j=1}^{n} \left(p_{ij}\right)^{2}},
\]
where $p_{ij}=\log a_{ij}$ and $d_{i}=\frac{1}{n} \sum_{k=1}^{n} p_{ik}$, and where the denominator acts as a normalization factor.
Here it can be proved that, if we discard the denominator, we obtain
\[
RE^{*}(\mathbf{A})=\sum_{i=1}^{n}\sum_{j=1}^{n}\left( p_{ij} - d_{i} + d_{j} \right)^{2},
\]
which, unlike $RE$, satisfies all the properties.
\begin{proposition}
Index $RE^{*}$ satisfies all the properties P1--P6.
\end{proposition}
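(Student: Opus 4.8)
The plan is to pass to logarithmic coordinates and recognise $RE^{*}$ as a squared Euclidean distance. Writing $p_{ij}=\log a_{ij}$, $d_{i}=\tfrac1n\sum_{k}p_{ik}$, and $L=\{(w_{i}-w_{j})_{n\times n}\,:\,w\in\mathbb{R}^{n}\}$ for the linear space of consistent log-matrices, a one-line first-order computation shows that $\sum_{i,j}(p_{ij}-w_{i}+w_{j})^{2}$ is minimised at $w_{i}=d_{i}$ (up to an irrelevant additive constant), so that $RE^{*}(\mathbf{A})=\lVert P-\Pi_{L}P\rVert_{F}^{2}=\lVert\Pi_{L^{\perp}}P\rVert_{F}^{2}$, the squared distance from $P=(p_{ij})$ to $L$. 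Three structural facts then do most of the work: (i) $RE^{*}(\mathbf{A})\ge 0$, with equality if and only if $P\in L$, i.e. $a_{ij}=w_{i}/w_{j}$ for all $i,j$; (ii) passing from $\mathbf{A}$ to $\mathbf{A}(b)$ replaces $P$ by $bP$ and each $d_{i}$ by $bd_{i}$, so $RE^{*}(\mathbf{A}(b))=b^{2}RE^{*}(\mathbf{A})$; (iii) transposing $\mathbf{A}$ replaces $P$ by $-P$ and each $d_{i}$ by $-d_{i}$, leaving every summand $(p_{ij}-d_{i}+d_{j})^{2}$ unchanged.

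From (i) one gets P1 with $\nu=0$: $RE^{*}(\mathbf{A})=0$ forces $p_{ij}=d_{i}-d_{j}$ for all $i,j$, which is exactly consistency, while a consistent matrix has $p_{ij}=\log w_{i}-\log w_{j}$, whence $d_{i}=\log w_{i}-\tfrac1n\sum_{k}\log w_{k}$ and every summand vanishes; uniqueness of $\nu$ is immediate since $0$ is attained on $\mathcal{A}^{*}$ and, by non-negativity, nowhere else. P2 is a relabelling argument: a permutation sends $p_{ij}\mapsto p_{\sigma^{-1}(i)\sigma^{-1}(j)}$ and $d_{i}\mapsto d_{\sigma^{-1}(i)}$, and reindexing the double sum recovers $RE^{*}(\mathbf{A})$. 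P5 holds because $RE^{*}$ is a finite composition of the logarithm, linear operations and squaring, all continuous on the positive orthant. P3 is (ii): for $b\ge 1$ we have $b^{2}\ge 1$ and $RE^{*}\ge 0$, hence $RE^{*}(\mathbf{A}(b))=b^{2}RE^{*}(\mathbf{A})\ge RE^{*}(\mathbf{A})$. P6 is precisely (iii).

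The remaining step, P4, is the only one needing a genuine computation and is where a slip is most likely. Fix a consistent $\mathbf{A}\in\mathcal{A}^{*}$ with $a_{pq}\neq 1$, so $p_{pq}\neq 0$, and let $\mathbf{A}_{pq}(\delta)$ be obtained by raising $a_{pq},a_{qp}$ to the power $\delta$; in log-coordinates this adds $\epsilon:=(\delta-1)p_{pq}$ to $p_{pq}$, subtracts it from $p_{qp}$, and changes only $d_{p}\mapsto d_{p}+\epsilon/n$ and $d_{q}\mapsto d_{q}-\epsilon/n$. Since $\mathbf{A}$ is consistent every summand of $RE^{*}$ starts at $0$, so one only has to add up the summands whose index pair meets $\{p,q\}$: the two pivot terms $(p,q),(q,p)$, the $2(n-2)$ terms of type $(p,j),(j,p)$ and the $2(n-2)$ terms of type $(q,j),(j,q)$. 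Carrying out this bookkeeping (equivalently, using $RE^{*}(\mathbf{A}_{pq}(\delta))=\epsilon^{2}\lVert\Pi_{L^{\perp}}(E_{pq}-E_{qp})\rVert_{F}^{2}$, where $E_{pq}$ is a matrix unit) yields
\[
RE^{*}(\mathbf{A}_{pq}(\delta))=\frac{2(n-2)}{n}\,(\delta-1)^{2}\,p_{pq}^{2}.
\]
The content is just the shape of this expression: it is $(\delta-1)^{2}$ times a strictly positive constant (recall $n>2$ and $p_{pq}\neq 0$), so $\delta\mapsto RE^{*}(\mathbf{A}_{pq}(\delta))$ is increasing on $(1,\infty)$ and decreasing on $(0,1)$; hence both $\delta'>\delta>1$ and $\delta'<\delta<1$ give $\lvert\delta'-1\rvert>\lvert\delta-1\rvert$ and therefore $RE^{*}(\mathbf{A}_{pq}(\delta'))\ge RE^{*}(\mathbf{A}_{pq}(\delta))$, which is P4. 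The main obstacle is thus purely clerical — one must not forget the ``off-pivot'' residuals $(p,j)$ and $(q,j)$, which become $\mp\epsilon/n$ even though the corresponding entries of $\mathbf{A}$ were left untouched — and the projection-theoretic phrasing is the cleanest way to sidestep it.
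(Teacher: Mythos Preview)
Your proof is correct and follows essentially the same path as the paper's: both arguments reduce P3 to the homogeneity identity $RE^{*}(\mathbf{A}(b))=b^{2}\,RE^{*}(\mathbf{A})$, and both reduce P4 to the closed form $RE^{*}(\mathbf{A}_{pq}(\delta))=\tfrac{2(n-2)}{n}\,\xi^{2}$ with $\xi=(\delta-1)p_{pq}$, which is manifestly a quadratic in $\delta-1$ with positive coefficient. The only noteworthy difference is packaging: the paper imports that closed form from an external proposition, whereas you derive it on the spot via the orthogonal-projection interpretation $RE^{*}(\mathbf{A})=\lVert\Pi_{L^{\perp}}P\rVert_{F}^{2}$, which also gives cleaner one-line verifications of P1 and P6; this makes your version more self-contained but not substantively different.
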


\begin{proof}
It is easy to show, and therefore omitted, that P1, P2, P5 and P6 are satisfied. We shall prove P3 and P4 separately.
To prove the satisfaction of P3 we consider  $\mathbf{A}(b) =\left( a_{ij}^{b} \right)$ and note that applying the logarithmic transformation to its entries, we obtain $ \left( b \log a_{ij} \right)_{n \times n} = \left( p_{ij}  \cdot b \right)_{n \times n}$. Hence,
\begin{equation*}
RE^{*}(\mathbf{A}(b))=\sum_{i=1}^{n}\sum_{j=1}^{n}\left( b \cdot p_{ij} - \frac{1}{n} \sum_{k=1}^{n} b \cdot p_{ik} + \frac{1}{n} \sum_{k=1}^{n} b \cdot p_{jk} \right)^{2} = b^{2} \cdot RE^{*}(\mathbf{A}),
\end{equation*}
which implies that $RE^{*}(\mathbf{A}(b))$ is monotone non-decreasing for $b>1$.\\
To show that P4 holds, let us consider the matrix $\mathbf{A}\in \mathcal{A}^{*}$, and its associated $\mathbf{P}=(p_{ij})_{n \times n} = (\log a_{ij})_{n \times n}$. P4 can equivalently be restated as the property that, if we take an entry $p_{pq}$ and its reciprocal $p_{qp}$ and substitute them with $p_{pq}+ \xi$ and $p_{qp} - \xi$, respectively, then the inconsistency index $RE^{*}$ is a quasi-convex function of $\xi$ with minimum in $\xi=0$. From the proof of Proposition 5 in \citep{BrunelliFedrizziAxioms} one recovers that, by introducing $\xi$, it is
\[
\sum_{i=1}^{n}\sum_{j=1}^{n}\left( p_{ij} - d_{i} + d_{j} \right)^{2} = 4(n-2)\left( \frac{\xi}{n}\right)^{2}+2\left( \frac{n-2}{n} \xi \right)^{2}.
\]
Thus one obtains,
\[
RE^{*}(\mathbf{A}_{pq}(\xi))=4(n-2)\left( \frac{\xi}{n}\right)^{2}+2\left( \frac{n-2}{n} \xi \right)^{2}
                    ={\frac{2(-2+n)\xi^{2}}{n}}
										=  \xi ^{2} \underbrace{\frac{2(n-2)}{n}}_{>0}
\]
which is a decreasing function of $\xi$ for $\xi <0$ and an increasing function for $\xi>0$. 
\end{proof}

\section{Discussion and conclusions}
\label{sec:5}


Choosing the most suitable inconsistency index is of considerable importance, 
yet formal studies had not been undertaken until very recently  \citep{BrunelliFedrizziAxioms,KoczkodajSzwarc2014}. This is in contrast with the existence of long-standing studies on other aspects of pairwise comparisons. One of these is the choice of the method for deriving the priority vector, for which axiomatic studies have been proposed in the literature already in the Eighties \citep{CookKress1988,Fichtner1986} and in the Nineties \citep{Barzilai1997}. Nevertheless, it has been shown by numerical studies \citep{IshizakaLusti2006} that, excepts for some particular cases, such differences can be negligible and that therefore, in most of the cases, choosing one method or another does not really influence the final outcome.


In light of the recently proposed five properties for inconsistency indices, the contribution of this research is at least twofold:
\begin{itemize}
	\item Firstly, it introduces and justifies a sixth property (P6) and shows that, together with the other five, it forms an an independent and logically consistent set of properties.
\item Secondly, the paper further analyzes the satisfaction of the properties P1--P6. Four inconsistency indices have been considered from the literature and it was found that two of them fail to fully satisfy the set of properties P1--P6. A simple adjustment of one of these indices was proposed to make it fit P1--P6.
\end{itemize}

Table \ref{tab:summary} presents a summary of the findings of this research and shows how they expanded the original set of properties for inconsistency indices \citep{BrunelliFedrizziAxioms}. It is remarkable that, in the form in which they were originally introduced in the literature, the majority of the indices satisfy only some of them. This seems to indicate that the definition of the properties and the analysis of their satisfaction is \emph{not} a mere theoretical exercise.\\
\begin{table}[htbp]
   
    \centering
        \begin{tabular}{l|cccccc}
            ~                                   & P1 & P2 & P3 & P4 & P5 & \multicolumn{1}{:c}{P6} \\
            \hline
      $CI$                   &  \cmark & \cmark  & \cmark  & \cmark  &  \cmark & \multicolumn{1}{:c}{\textbf{\cmark}} \\
      $GW$                  &  \cmark & \cmark  & \xmark  & ---  &  \cmark & \multicolumn{1}{:c}{\textbf{\cmark}}  \\
      $GCI$                &  \cmark & \cmark  & \cmark  & \cmark  &  \cmark & \multicolumn{1}{:c}{\textbf{\cmark}}  \\
      $RE$                   &  \cmark & \cmark  & \cmark  & \xmark  &  \xmark & \multicolumn{1}{:c}{\textbf{\cmark}}  \\
      $CI^{*}$             &  \cmark & \cmark  & \cmark  & \cmark  &  \cmark & \multicolumn{1}{:c}{\textbf{\cmark}}  \\
      $HCI$                &  \cmark & \cmark  & \xmark  & \cmark  &  \cmark & \multicolumn{1}{:c}{\textbf{\cmark}}  \\
      $NI^{\sigma}_{n}$      &  \cmark & \cmark  & ---  & \xmark  &  \cmark & \multicolumn{1}{:c}{\textbf{\cmark}}  \\ \cdashline{1-6}

			$K$ (Definition \ref{def:K})     							&  \textbf{\cmark} & \textbf{\cmark}  & \textbf{\cmark}  & \textbf{\cmark}  &  \textbf{\cmark} & \textbf{\cmark} \\
			$AI$ (Definition \ref{def:AI})     							&  \textbf{\cmark} & \textbf{\cmark}  & \textbf{\xmark}  & \textbf{\cmark}  &  \textbf{\cmark} & \textbf{\cmark} \\
			$CI_{H}$ (Definition \ref{def:D})     							&  \textbf{\cmark} & \textbf{\cmark}  & \textbf{\cmark}  & \textbf{\cmark}  &  \textbf{\cmark} & \textbf{\cmark} \\
			$CCI$ (Definition \ref{def:CCI})     							&  \textbf{\cmark} & \textbf{\cmark}  & \textbf{\xmark}  & ---  &  \textbf{\cmark} & \textbf{\cmark} \\
        \end{tabular}
    \caption{Summary of propositions: \cmark=property is satisfied, \xmark=property is not satisfied, \mbox{`---'=unknown}. The original results presented in this research are separated from previous ones \citep{BrunelliFedrizziAxioms} by the dashed lines.}
    \label{tab:summary}
\end{table}

The properties were here, and in previous research \citep{BrunelliFedrizziAxioms}, justified. Nevertheless, clearly, this should not prevent anyone from criticizing and improving them: it is indeed desirable that a set of properties be openly discussed within a community. In this direction, if the system P1--P6 is considered too restrictive, it is worth noting that Theorem 1 implies that any subset of the properties P1--P6 also forms an independent and logically consistent set or properties (just a more relaxed one) which, indeed, can be used for the same purposes of P1--P6.
In conclusion, it is the author's belief that a systematic study of inconsistency and inconsistency indices may bring new insights and more formal order into the evergreen topic of rational decision making. Furthermore, in the future, it should be possible to extend the set of properties to other types of numerical representations of preferences as, for instance, reciprocal preference relations \citep{Tanino1984} and skew-symmetric additive representations \citep{Fishburn1999}.

\subsubsection*{Acknowledgements}

{\footnotesize 
The author is grateful to the reviewers and the Associate Editor for their precious comments. The manuscript benefited from the author's discussions with Michele Fedrizzi and Ragnar Freij. A special mention goes to S\'{a}ndor Boz\'{o}ki who also had the intuition that a property might have been missing. This research has been financially supported by the Academy of Finland.
}

\setlength{\bibsep}{1pt plus 0.3ex}

%
%
%

\begin{thebibliography}{99}
 {\small

\bibitem[Aguar{\'o}n and Moreno-Jim{\'e}nez, 2003]{AguaronMoreno2003}
Aguar{\'o}n, J., \& Moreno-Jim{\'e}nez, J.~M. (2003).
\newblock The geometric consistency index: Approximated thresholds.
\newblock {\em European Journal of Operational Research}, 147(1), 137--145.

\bibitem[Barzilai, 1997]{Barzilai1997}
Barzilai, J. (1997).
\newblock Deriving weights from pairwise comparison matrices.
\newblock {\em The Journal of the Operational Research Society},
 48(12), 1226--1232.
	
\bibitem[Barzilai, 1998]{Barzilai1998}
Barzilai, J. (1998).
\newblock Consistency measures for pairwise comparison matrices.
\newblock {\em Journal of Multi-Criteria Decision Analysis},
7(3), 123--132.

\bibitem[Belton and Gear, 1983]{BeltonGear1983}
Belton, V., \& Gear, T., (1983).
\newblock On a short-coming of Saaty's method of analytic hierarchies.
\newblock {\em Omega},
11(3), 228--230.

\bibitem[Boz{\'o}ki and Rapcs{\'a}k, 2008]{BozokikRapcsak2008}
Boz{\'o}ki, S., \& Rapcs{\'a}k, T. (2008).
\newblock On {S}aaty's and {K}oczkodaj's inconsistencies of pairwise comparison
  matrices.
\newblock {\em Journal of Global Optimization}, 42(2), 157--175.

\bibitem[Boz{\'o}ki et al, 2013]{BozokiEtAl2013}
Boz{\'o}ki, S., Dezs{\H{o}}, L., Poesz, A., \& Temesi, J. (2013).
\newblock Analysis of pairwise comparison matrices: an empirical research.
\newblock {\em Annals of Operations Research}, 211(1), 511--528.

\bibitem[Brunelli, 2011]{Brunelli2011}
Brunelli, M. (2011).
\newblock A note on the article ``Inconsistency of pair-wise comparison matrix
               with fuzzy elements based on geometric mean'' [Fuzzy Sets and Systems
               161 {(2010)} 1604-1613].
\newblock {\em Fuzzy Sets and Systems}, 176(1), 76--78.

\bibitem[Brunelli et al, 2013a]{BrunelliCanalFedrizzi}
Brunelli, M., Canal, L., \& Fedrizzi M. (2013a).
\newblock Inconsistency indices for pairwise comparison matrices: a numerical
  study.
\newblock {\em Annals of Operations Research}, 211(1), 493--509.

\bibitem[Brunelli et al., 2013b]{BrunelliCritchFedrizzi2013}
Brunelli, M., Critch, A., \& Fedrizzi M. (2013b).
\newblock A note on the proportionality between some consistency indices in the
  {AHP}.
\newblock {\em Applied Mathematics and Computation}, 219(14), 7901--7906.

\bibitem[Brunelli and Fedrizzi, 2015a]{BrunelliFedrizziAxioms}
Brunelli, M., \& Fedrizzi, M. (2015a). 
\newblock Axiomatic properties of inconsistency indices for pairwise comparisons.
\newblock {\em Journal of the Operational Research Society}, 66(1), 1--15.

\bibitem[Brunelli and Fedrizzi, 2015b]{BrunelliFedrizzi2014}
Brunelli, M., \& Fedrizzi, M. (2015b).
\newblock Boundary properties of the inconsistency of pairwise comparisons in
  group decisions.
\newblock {\em European Journal of Operational Research}, 230(3), 765--773.

\bibitem[Cavallo and D'Apuzzo, 2009]{CavalloDApuzzo2009}
Cavallo, B., \& D'Apuzzo, L. (2009).
\newblock A general unified framework for pairwise comparison matrices in multicriterial methods.
\newblock {\em International Journal of Intelligent Systems}, 24(4), 377--398.

\bibitem[Cavallo and D'Apuzzo, 2012]{CavalloDApuzzoIPMU}
Cavallo, B., \& D'Apuzzo, L. (2012).
\newblock Investigating properties of the $\odot$-consistency index.
\newblock In: {Advances in Computational Intelligence. Communications in Computer
and Information Science}, Vol. 4, pp. 315--327.

\bibitem[Chen et al., 2015]{ChenEtAl}
Chen, K., Kou, G., Tarn, J.M., \& Song, Y. (2015).
\newblock Bridging the gap between missing and inconsistent values in eliciting preference from pairwise comparison matrices.
\newblock{\em Annals of Operations Research}, 235(1), 155--175.

\bibitem[Cook and Kress, 1988]{CookKress1988}
Cook, W.~D., \& Kress, M (1988).
\newblock Deriving weights from pairwise comparison ratio matrices: An
  axiomatic approach.
\newblock {\em European Journal of Operational Research}, 37(3), 355--362.

\bibitem[Duszak and Koczkodaj, 1994]{Koczkodaj1994}
Duszak, Z., \& Koczkodaj, W.~W. (1994).
\newblock Generalization of a new definition of consistency for pairwise
  comparisons.
\newblock {\em Information Processing Letters}, 52(5), 273--276.

\bibitem[Dyer, 1990a]{Dyer1990a}
Dyer, J.~S. (1990a).
\newblock Remarks on the analytic hierarchy process.
\newblock {\em Management Science}, 36(3), 249--258.

\bibitem[Dyer, 1990b]{Dyer1990b}
Dyer, J.~S. (1990b).
\newblock A clarification of ``Remarks on the analytic hierarchy process''.
\newblock {\em Management Science}, 36(3), 274--275.

\bibitem[Ergu et al., 2011]{ErguEtAl2011}
Ergu, D., Kou, G, Peng, Y, \& Shi, Y. (2011).
\newblock A simple method to improve the consistency ratio of the pair-wise
  comparison matrix in {ANP}.
\newblock {\em European Journal of Operational Research}, 213(1), 246--259.

\bibitem[Fichtner, 1986]{Fichtner1986}
Fichtner, J. (1986).
\newblock On deriving priority vectors from matrices of pairwise comparisons.
\newblock {\em Socio-Economic Planning Sciences}, 20(6), 341--345.

\bibitem[Fishburn, 1968]{Fishburn1968}
Fishburn, P.~C. (1968).
\newblock Utility theory.
\newblock {\em Management Science}, 14(5), 335--378.

\bibitem[Fishburn, 1999]{Fishburn1999}
Fishburn, P.~C. (1999).
\newblock Preference relations and their numerical representations.
\newblock {\em Theoretical Computer Science}, 217(2), 359--383.

\bibitem[Gass, 2005]{Gass2005}
Gass, S.~I. (2005).
\newblock Model world: The great debate --- {MAUT} versus {AHP}.
\newblock {\em Interfaces}, 35(4), 308--312.

\bibitem[H{\"a}m{\"a}l{\"a}inen and P{\"o}yh{\"o}nen, 1996]{HamalainenPoyhonen1996}
H{\"a}m{\"a}l{\"a}inen, R.~P., \& P{\"o}yh{\"o}nen, M. (1996).
\newblock On-line group decision support by preference programming in traffic
  planning.
\newblock {\em Group Decision and Negotiation}, 5(4--6), 485--500.

\bibitem[Herman and Koczkodaj, 1996]{HermanKoczkodaj1996}
Herman, M.~W., Koczkodaj, W.~W. (1996).
\newblock A Monte Carlo study of pairwise comparison.
\newblock {\em Information Processing Letters}, 57(1), 25--29.

\bibitem[Irwin, 1958]{Irwin1958}
Irwin, F.~W. (1958).
\newblock An analysis of the concepts of discrimination and preference.
\newblock {\em The American Journal of Psychology}, 71(1), 152--163.

\bibitem[Ishizaka and Lusti, 2006]{IshizakaLusti2006}
Ishizaka, A., \& Lusti, M. (2006).
\newblock How to derive priorities in {AHP}: a comparative study.
\newblock {\em Central European Journal of Operations Research},
  14(4), 387--400.

\bibitem[Kakiashvili et al., 2012]{KakiashviliEtAl2012}
Kakiashvili, T., Koczkodaj, W.~W., \& Woodbury-Smith, M. (2012).
\newblock Improving the medical scale predictability by the pairwise
  comparisons method: Evidence from a clinical data study.
\newblock {\em Computer Methods and Programs in Biomedicine}, 105(3):210--216.

\bibitem[Koczkodaj and Szwarc, 2014]{KoczkodajSzwarc2014}
Koczkodaj, W., \& Szwarc, R. (2014).
\newblock On axiomatization of inconsistency indicators in pairwise
  comparisons.
\newblock {\em Fundamenta Informaticae}, 132(4), 485--500.

\bibitem[Koczkodaj, 1993]{Koczkodaj1993}
Koczkodaj, W.~W. (1993).
\newblock A new definition of consistency of pairwise comparisons.
\newblock {\em Mathematical and Computer Modelling}, 18(7), 79--84.

\bibitem[Koczkodaj et al., 1999]{Koczkodaj1999}
Koczkodaj, W.~W., Herman, M.~W., \& Orlowski, M. (1999).
\newblock Managing null entries in pairwise comparisons.
\newblock {\em Knowledge and Information Systems}, 1(1), 119--125.

\bibitem[Koczkodaj et al., 2014]{KoczkodajEtAl2014}
Koczkodaj, W.~W., Kulakowski, K., \& Ligeza, A. (2014).
\newblock On the quality evaluation of scientific entities in {P}oland
  supported by consistency-driven pairwise comparisons method.
\newblock {\em Scientometrics}, 99(3), 911--926.


\bibitem[Kou and Liu, 2014]{KouLin2014}
Kou, G., \& Lin, C. (2014).
\newblock A cosine maximization method for the priority vector derivation in
  {AHP}.
\newblock {\em European Journal of Operational Research}, 235(1), 225--232.
	
\bibitem[Ku\l akowski, 2015]{Kulakowski2015}
Ku\l akowski, K. (2015).
\newblock Notes on order preservation and consistency in {AHP}.
\newblock {\em European Journal of Operational Research}, 245(1), 333--337.

\bibitem[Lamata and Pelaez, 2002]{LamataPelaez2002}
Lamata, M.~T., \& Pel{\'a}ez, J.~I. (2002).
\newblock A method for improving the consistency of judgements.
\newblock {\em International Journal of Uncertainty, Fuzziness and
  Knowledge-Based Systems}, 10(6), 677--686.
	
\bibitem[Lin et al., 2013]{LinEtAl2013}
Lin, C., Kou, G., \& Daji, E. (2013).
\newblock An improved statistical approach for consistency test in AHP.
\newblock {\em Annals of Operations Research}, 211(1), 289--299.
	
\bibitem[Luce and Suppes, 1965]{LuceSuppes1965}
Luce, R.~D., \& Suppes, P. (1965).
\newblock Preference, utility and subjective probability.
\newblock In: R.D. Luce, R.R. Bush, \ E.H. Galanter (Eds.), {\em Handbook of Mathematical Psychology}, pp 249--410.

\bibitem[Luce and Raiffa, 1957]{LuceRaiffa1957}
Luce, R.~D., \& Raiffa, H. (1957).
\newblock {\em Games and Decisions}.
\newblock John Wiley and Sons.

\bibitem[Maleki and Zahir, 2013]{MalekiZahir2013}
Maleki, H., \and Zahir, S. (2013).
\newblock A comprehensive literature review of the rank reversal phenomenon in
  the analytic hierarchy process.
\newblock {\em Journal of Multi-Criteria Decision Analysis}, 20(3-4), 141--155.

\bibitem[Mustajoki and H\"{a}m\"{a}l\"{a}inen, 2000]{Mustajoki2000}
Mustajoki, J., \& H\"{a}m\"{a}l\"{a}inen, R.~P. (2000).
\newblock Web-{HIPRE}: global decision support by value tree and {AHP}
  analysis.
\newblock {\em INFOR Journal}, 38(3), 208--220.

\bibitem[Nikou and Mezei, 2013]{NikouMezei2013}
Nikou, S., \& Mezei, J. (2013).
\newblock Evaluation of mobile services and substantial adoption factors with Analytic Hierarchy Process ({AHP})
  analysis.
\newblock {\em Telecommunications Policy}, 37(10), 915--929.

\bibitem[Nikou et al., 2015]{NikouMezeiSarlin2015}
Nikou, S., Mezei, J., \& Sarlin, P. (2015).
\newblock A process view to evaluate and understand preference elicitation.
\newblock {\em Journal of Multi-Criteria Decision Analysis}, 22(5--6), 305--329.

\bibitem[Pereira and Costa, 2015]{PereiraCosta2015}
Pereira, V., \& Costa, H.~G. (2015).
\newblock Nonlinear programming applied to the reduction of inconsistency in the AHP method.
\newblock {\em Annals of Operations Research}, 229(1), 635--655.

\bibitem[Ram\'{i}k and Korviny, 2010]{RamikKorviny2010}
Ram\'{i}k, J., \& Korviny, P. (2010).
\newblock Inconsistency of pair-wise comparison matrix with fuzzy elements based on geometric mean.
\newblock {\em Fuzzy Sets and Systems}, 161(11), 1604--1613.

\bibitem[Saaty, 1993]{Saaty1993}
Saaty, T.~L. (1993).
\newblock What is relative measurement? {T}he ratio scale phantom.
\newblock {\em Mathematical and Computer Modelling}, 17(4), 1--12.

\bibitem[Saaty, 2013]{Saaty2013}
Saaty, T.~L. (2013).
\newblock The modern science of multicriteria decision making and its practical
  applications: The {AHP}/{ANP} approach.
\newblock {\em Operations Research}, 61(5), 1101--1118.

\bibitem[Salo and H\"{a}m\"{a}l\"{a}inen, 1995]{SaloHamalainen1995}
Salo, A.~A., \& H\"{a}m\"{a}l\"{a}inen, R.~P. (1995).
\newblock Preference programming through approximate ratio comparisons.
\newblock {\em European Journal of Operational Research}, 82(3), 458--475.

\bibitem[Salo and H\"{a}m\"{a}l\"{a}inen, 1997]{SaloHamalainen1997}
Salo, A.~A., \& H\"{a}m\"{a}l\"{a}inen, R.~P. (1997).
\newblock On the measurement of preferences in the analytic hierarchy process.
\newblock {\em Journal of Multi-Criteria Decision Analysis}, 6(6), 309--319.


\bibitem[Shiraishi et al., 1999]{ShiraishiEtAl1999}
Shiraishi, S., Obata, T., Daigo, M., \& Nakajima, N. (1999).
\newblock Assessment for an incomplete comparison matrix and improvement of an
  inconsistent comparison: computational experiments.
\newblock In {\em ISAHP 1999}.

\bibitem[Stein and Mizzi, 2007]{SteinMizzi2007}
Stein, W.~E., \& Mizzi, P.~J. (2007).
\newblock The harmonic consistency index for the analytic hierarchy process.
\newblock {\em European Journal of Operational Research}, 177(1):488--497,
  2007.

\bibitem[Tanino, 1984]{Tanino1984}
Tanino, T. (1984).
\newblock Fuzzy preference orderings in group decision making.
\newblock {\em Fuzzy Sets and Systems}, 12(2), 117--131.


\bibitem[Watson and Freeling, 1982]{WatsonFreeling1982}
Watson, S.~R., \& Freeling, A.~N.~S. (1982).
\newblock Assessing attribute weights.
\newblock {\em Omega}, 10(6), 582--583.

\bibitem[Watson and Freeling, 1983]{WatsonFreeling1983}
Watson, S.~R., \& Freeling, A.~N.~S. (1983).
\newblock Comment on: assessing attribute weights by ratios.
\newblock {\em Omega}, 11(1), 13.

\bibitem[Wu and Xu, 2012]{WuXu2012}
Wu, Z., \& Xu, J. (2012).
\newblock A consistency and consensus based decision support model for group
  decision making with multiplicative preference relations.
\newblock {\em Decision Support Systems}, 52(3), 757--767.
}

\end{thebibliography}
%
%
\section*{Appendix: Proof of Theorem 1}
To prove \emph{logical consistency}, it is sufficient to find an instance of $I:\mathcal{A}\rightarrow \mathbb{R}$ which satisfies all the properties P1--P6. One such instance is the following function
\begin{equation}
\label{eq:I*}
I^{*}(\mathbf{A})=\sum_{i=1}^{n-2}\sum_{j=i+1}^{n-1}\sum_{k=j+1}^{n} \left( \frac{a_{ik}}{a_{ij}a_{jk}} + \frac{a_{ij}a_{jk}}{a_{ik}} - 2 \right)
\end{equation}
To prove the \emph{independence} of P1--P6, it is sufficient to find a function satisfying all properties except one, for all the properties. The examples of inconsistency indices proposed by \citet{BrunelliFedrizziAxioms} to prove the independence of the system P1--P5 are invariant under transposition. If follows that P1--P5 are logically independent within the system P1--P6. It remains to show that P6 does \emph{not} depend on P1--P5.
 Consider that, if $\mathbf{A}$ has one row, say $H$, whose non-diagonal elements are all greater than one, i.e. $a_{Hj}>1~\forall j\neq H$, then this property is shared by any matrix $\mathbf{P}\mathbf{A}\mathbf{P}^{T}$, where $\mathbf{P}$ is any permutation matrix, but not by its transpose $\mathbf{A}^{T}$.
Taking into account the inconsistency index $I^{*}$ in (\ref{eq:I*}), and defining $H$ as the row with the greatest non-diagonal element, then the function
\begin{equation}
\label{eq:ex_A6}
I_{\lnot 6}(\mathbf{A})=I^{*}(\mathbf{A}) \cdot \underbrace{\left( 1 + \max \left\{ \min_{j\neq H} \{ a_{Hj} - 1 \} ,0 \right\} \right)}_{M}
\end{equation}
is invariant under row-column permutation but not under transposition. Hence, $I_{\lnot 6}$ satisfies AP but not P6. To prove the independence of P6, it remains to show that (\ref{eq:ex_A6}) satisfies P1 and P3--P5. It is easy, and thus omitted, to show that P1, P3, and P5 are satisfied. To prove it for P4, we note that any $\mathbf{A}\in \mathcal{A}^{*}$ can be rewritten as
\begin{equation}
\mathbf{A}=
\begin{pmatrix}
1                                      & a_{12}                                & a_{12}a_{23}     & \cdots  & a_{12} \cdot \ldots \cdot a_{n-1 \, n}  \\
\frac{1}{a_{12}}                       &    1                                  &     a_{23}       & \cdots  & a_{23} \cdot \ldots \cdot a_{n-1 \, n}  \\
\cdots                                 & \cdots                                & \cdots                             & \cdots  & \cdots    \\
\frac{1}{a_{12} \cdot \ldots \cdot a_{n-2 \, n-1}} & \frac{1}{a_{23} \cdot \ldots \cdot a_{n-2 \, n-1}}& \frac{1}{a_{34} \cdot \ldots \cdot a_{n-2 \, n-1}}                                  & \cdots  & a_{n-1 \, n} \\
\frac{1}{a_{12} \cdot \ldots \cdot a_{n-1 \, n}}   & \frac{1}{a_{23} \cdot \ldots \cdot a_{n-1 \, n}}  & \frac{1}{a_{34} \cdot \ldots \cdot a_{n-1 \, n}}& \cdots  & 1
\end{pmatrix} \in \mathcal{A}^{*}
\end{equation}
Without loss of generality let us consider $a_{1n}$ and its reciprocal $a_{n1}$ and replace them with $a_{1n}^{\delta}$ and $a_{n1}^{\delta}$, respectively.
Then, by calling $\mathbf{A}_{1n}^{\delta}$ the new matrix and bearing in mind that $\mathbf{A} \in \mathcal{A}^{*}$, we have
\begin{align*}
I^{*}(\mathbf{A}_{1n}^{\delta}) &= \sum_{j=2}^{n-1} \left( \frac{a_{1n}^{\delta}}{a_{1j}a_{jn}} + \frac{{a_{1j}a_{jn}}}{{a_{1n}^{\delta}}} - 2 \right)\\  &= (n-2) \left( \frac{ \left( a_{12} \cdot \ldots \cdot a_{n-1 \, n} \right)^{\delta}}{a_{12} \cdot \ldots \cdot a_{n-1 \, n}} + \frac{a_{12} \cdot \ldots \cdot a_{n-1 \, n}}{ \left( a_{12} \cdot \ldots \cdot a_{n-1 \, n} \right)^{\delta}} -2 \right)
\end{align*}
If $H \notin \{ 1,n \}$, then, in (\ref{eq:ex_A6}) $M$ is constant and P4 holds in this case. Also if $H \in \{1,n \}$ and $\min_{j \neq H} \{ a_{Hj} \} \neq a_{1n}$, then $M$ is constant and P4 is satisfied. Finally, if $H=1$ and $\min_{j \neq H} \{ a_{Hj} \} = a_{1n}$, it is
\begin{equation}
\label{eq2}
I_{\lnot 6}(\mathbf{A}_{1n}^{\delta})= I^{*}(\mathbf{A}_{1n}^{\delta}) \cdot \left( 1 +  a_{1n}^{\delta} - 1 \right)= I^{*}(\mathbf{A}_{1n}^{\delta}) \cdot   a_{1n}^{\delta} 
\end{equation}
which can be reduced to
\begin{align*}
I_{\lnot 6}(\mathbf{A}_{1n}^{\delta})  =& \overbrace{(n-2) \left( \frac{ \left( a_{12} \cdot \ldots \cdot a_{n-1 \, n} \right)^{\delta}}{a_{12} \cdot \ldots \cdot a_{n-1 \, n}} + \frac{a_{12} \cdot \ldots \cdot a_{n-1 \, n}}{ \left( a_{12} \cdot \ldots \cdot a_{n-1 \, n} \right)^{\delta}} -2 \right)}^{I^{*}(\mathbf{A}^{\delta}_{1n})}  \cdot \overbrace{(a_{12} \cdots a_{n-1 \, n})^{\delta}}^{a_{1n}^{\delta}}
 \\
= & (n-2) \left(\frac{a_{1n}^{2 \delta}}{a_{1n}} + a_{1n} - 2 a_{1n}^{\delta}    \right).
\end{align*}
Considering that, from $\mathbf{A}\in \mathcal{A}^{*}$ and $H=1$, it follows that $a_{1n} \geq 1$ and the partial derivative in $\delta$ is
\begin{align*}
\frac{\partial I_{\lnot 6}(\mathbf{A}_{1n}^{\delta})}{\partial \delta}= & (n-2) \left( 2a_{1n}^{2\delta - 1}\log(a_{1n}) - 2 a_{1n}^{\delta} \log(a_{1n}) \right) \\
=& \underbrace{(n-2)}_{>0} \underbrace{\left( 2 a_{1n}^{\delta - 1}\right)}_{>0} \left( a_{1n}^{\delta}-a_{1n} \right) \underbrace{\left( \log a_{1n} \right)}_{>0},                                      
\end{align*}
which is always non-negative for $\delta >1$ and non-positive for $ 0 < \delta < 1$. Thus, P4 is satisfied and the properties P1--P6 are logically independent.

\end{document}